\spnewtheorem{observation}[theorem]{Observation}{\bfseries}{\itshape}
\newcommand{\cclass} [1] {\textnormal{\textsf{#1}}}
\begin{document}
\title{Leveraging Fixed-Parameter Tractability for Robot Inspection Planning}
\author{
    Yosuke Mizutani\inst{1} \and
    Daniel Coimbra Salomao\inst{1} \and
    Alex Crane\inst{1} \and
    Matthias Bentert\inst{2} \and
    Pål Grønås Drange\inst{2} \and
    Felix Reidl\inst{3} \and
    Alan Kuntz\inst{1} \and
    Blair D. Sullivan\inst{1}
}

\authorrunning{Y.\ Mizutani et al.}
\institute{
    University of Utah, USA \and
    University of Bergen, Norway \and
    Birkbeck, University of London, UK}
\maketitle

\begin{abstract}\label{abstract}
    Autonomous robotic inspection, where a robot moves through its environment and inspects points of interest, has applications in industrial settings, structural health monitoring, and medicine.
    Planning the paths for a robot to safely and efficiently perform such an inspection is an extremely difficult algorithmic challenge.
    In this work we consider an abstraction of the inspection planning problem which we term \gi.
    We give two exact algorithms for this problem, using dynamic programming and integer linear programming.  We analyze the performance of these methods, and present multiple approaches to achieve scalability. 
    We demonstrate significant improvement both in path weight and inspection coverage over a state-of-the-art approach on two robotics tasks in simulation, a bridge inspection task by a UAV and a surgical inspection task using a medical robot.\looseness-1

\end{abstract}

\section{Introduction}\label{sec:intro}

Inspection planning, where a robot is tasked with planning a path through its environment
to sense a set of points of interest (POIs) has broad potential applications.
These include the inspection of surfaces to identify defects in industrial settings such as
car surfaces~\cite{atkar2005uniform},
urban structures~\cite{cheng2008time}, and
marine vessels~\cite{englot2017planning},
as well as in medical applications to enable the mapping of
subsurface anatomy~\cite{Cho2021_ISMR,Cho2024_ICRA} or disease diagnosis.

Consider the demonstrative medical example of diagnosing the cause of pleural effusion,
a medical condition in which a patient's pleural space---the area between the lung and the chest wall---%
fills with fluid, collapsing the patient's lung~\cite{Light2007_Pleural,Harris1995_Chest,Noppen2010_SRCCM}.
Pleural effusion is a symptom, albeit a serious one, of one of over fifty underlying causes,
and the treatment plan varies significantly,
depending heavily on which of the underlying conditions has caused the effusion.
To diagnose the underlying cause, physicians will insert an endoscope into the pleural space
and attempt to inspect areas of the patient's lung and chest wall.
Automated medical robots have been proposed as a potential assistive technology with great promise
to ease the burden of this difficult diagnostic procedure.
However, planning the motions to inspect the inside of a patient's body with a medical robot,
or indeed \emph{any environment with any robot} is an extremely challenging problem.

Planning a motion for a robot to move from a single configuration to another configuration is,
under reasonable assumptions, known to be \cclass{PSPACE}-hard~\cite{Halperin2017_Book}.
Inspection planning extends this typical motion planning problem
by requiring the traversal of multiple configurations.
The planned route may need to include complexities such as
tracing back to where the robot has already been and/or traversing circuitous routes.
Consider~\Cref{fig:intro}, where examples are given of inspections that necessitate circuitous paths or backtracking during inspection.
While the specific examples given are intuitive in the robots' workspaces, cycles and backtracking may be required in the c-space graph in ways that don't manifest intuitively in the workspace as well.

\begin{figure}[t]
    \centering
    \includegraphics[width=0.75\columnwidth]{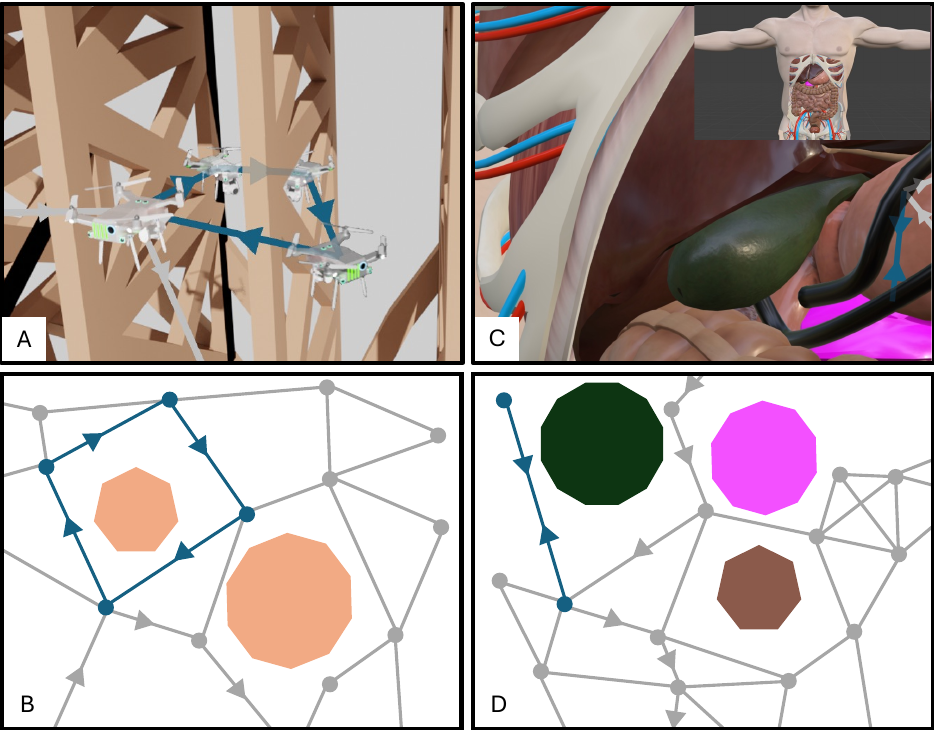}
    \caption{Inspection planning, in contrast to traditional motion planning, may necessitate leveraging cycles and backtracking on graphs embedded in the robot's configuration space. This necessitates computing a walk (rather than a path) on a graph. (A) A quadrotor, while inspecting a bridge for potential structural defects, may need to circle around obstacles, (B) leveraging a cycle in its c-space graph (teal). (C) A medical endoscopic robot (black) may need to move into and then out of an anatomical cavity to, e.g., visualize the underside of a patient's gallbladder (green), (D) requiring backtracking in its c-space graph (teal).}
    \label{fig:intro}
\end{figure}

Further, it is almost certainly not sufficient to only consider the ability to inspect the POIs,
but one must also consider the \emph{cost} of the path taken to inspect them.
This is because while inspecting POIs may be an important objective,
it is not the only objective in real robotics considerations;
unmanned aerial vehicles (UAVs) must operate within their battery capabilities,
and medical robots must consider the time a patient is subjected to a given procedure.

The state-of-the-art in inspection planning, presented by Fu \emph{et al.}~\cite{fu2021computationally} and named \iriscli,
casts this problem as an iterative process with two phases.
In the first phase, a rapidly-exploring random graph (RRG)~\cite{karaman2011sampling} is constructed.
In this graph, each vertex represents a possible configuration of the robot, edges indicate the ability to
transition between configuration states, and edge weights indicate the cost of these transitions. Additionally, every
vertex is labeled with the set of POIs which may be inspected by the robot when in the associated configuration state.
In the second phase, a walk is computed in this graph with the dual objectives of (a) inspecting all POIs and (b) minimizing the
total weight (cost) of the walk. By repeating both phases iteratively, Fu \emph{et al.} are able to guarantee
asymptotic optimality\footnote{See~\cite{fu2021computationally} for specific definition of asymptotic optimality in their case.} of the resulting inspection plan.

In this work we focus on improving the second phase.
We formulate this phase as an algorithmic problem on edge-weighted and vertex-multicolored graphs,
which we call \gi{} (formally defined in~\Cref{sec:prelims}).
\gi{} is a generalization of the well-studied \TSP{} (\TSPshort) problem\footnote{
  Given an edge-weighted graph and a start-vertex $s$,
  compute a minimum-weight closed walk from $s$ visiting every vertex exactly once.}~\cite{applegate2006traveling}. As such, it is deeply related to the rich literature on
``color collecting'' problems studied by the graph algorithms community.

\gi{} is closely related to the \textsc{Generalized Traveling Salesperson Problem} (also known as \prob{Group TSP}), in which the goal is to find a ``Hamiltonian cycle visiting a collection of vertices with the property that exactly one vertex from each [color] is visited''~\cite{pop2024comprehensive}.
If each vertex can belong to several color classes, the instance can be transformed into an instance of \prob{GTSP}~\cite{lien1993tranformation,dimitrijevic1997anefficient}.
However, in the \gi{} problem, we do not demand that a vertex cannot be
visited several times; indeed, we expect that to be the case for many
real-world cases.
Rice and Tsotras~\cite{rice2012exact} gave an exact algorithm
for \prob{GTSP} in $\Oh^*(2^k)$ time\footnote{The $\Oh^*$ notation hides polynomial factors.}, and although being inapproximable to a logarithmic factor~\cite{safra2006complexity}, they gave an $O(r)$-approximation in running time $\Oh^*(2^{k/r})$~\cite{rice2013parameterizedalgorithms}.

Two other related problems are the \textsc{$T$-Cycle} problem, and the \textsc{Maximum Colored $(s, t)$-Path} problem\footnote{
Also studied under the names \textsc{Tropical Path}~\cite{cohen2021tropical} and \textsc{Maximum Labeled Path}~\cite{couetoux2017maximum}.}.  Björklund, Husfeldt, and Taslaman provided an $\Oh^*(2^{|T|})$ randomized algorithm for the \textsc{$T$-Cycle} problem, in which we are asked to find a (simple) cycle that visits all vertices in $T$~\cite{bjorklund2012shortest}.
In the \textsc{Maximum Colored $(s, t)$-Path} problem, we are given a
vertex-colored graph $G$, two
vertices $s$ and $t$, and an integer $k$, and we are asked if there
exists an $(s,t)$-path that collects at least $k$ colors, and if so,
return one with minimum weight.
Fomin et al.~\cite{fomin2023fixed} gave a randomized algorithm running
in time $\Oh^*(2^k)$ for this problem.
Again, in both of these problems a crucial restriction is the search for \emph{simple} paths or cycles.

Though \gi{} is distinct from the problems mentioned above, we leverage techniques from this literature to propose two algorithms which can solve \gi{} optimally\footnote{Note that in this case, and subsequently in the paper unless otherwise indicated, `optimal' refers to an optimal walk on the given graph and is distinct from the asymptotic optimality guarantees provided in~\cite{fu2021computationally}.}.
First, in~\Cref{sec:fpt} we show that while \gi{} is \cclass{NP}-hard (as a \TSPshort{} generalization), a dynamic programming approach can solve our problem in $2^{\colorsetsize} \cdot \text{poly}(n)$
time and memory, where $\colorsetsize$ is the number of POIs.
Additionally, we draw on techniques used in the study of~\TSP{}~\cite{cohen2019severalgraph}
to provide a novel integer linear programming (ILP) formulation, which we describe in~\Cref{sec:ilp-theory}.

To deal with the computational intractability of \gi{}, Fu \emph{et al.} took the approach of relaxing the problem to \emph{near optimality},
enabling them to leverage heuristics in the graph search to achieve reasonable computational speed when solving the problem.
We take two approaches. Using the ILP, we show that on several practical instances drawn from~\cite{fu2021computationally},
\gi{} can be solved almost exactly in reasonable runtime.
However, we note the ILP will not directly scale to very large graphs. For the dynamic programming routine, our approach is more nuanced:
First, we note that as the running time and memory consumption of this algorithm is exponential only in the number of POIs, while remaining
polynomial in the size of the graph, it can optimally solve \gi{} when only a few POIs are present. 
This situation naturally arises in some application areas, particularly in medicine when the most relevant anatomical POIs may be few and known in advance. When the number of POIs is large, we adapt the dynamic programming algorithm into a heuristic by selecting several small subsets of POIs in a principled manner (see~\Cref{sec:color-reduction}), running the dynamic program independently for each small subset, and then ``merging'' the resulting walks (see~\Cref{sec:walk-merging}). Though our implementation is heuristic, it is rooted in theory: it is possible to combine dynamic programming with a ``partition and merge'' strategy such that, given enough runtime, the resulting walk is optimal (see~\Cref{appendix:B}).

We demonstrate the practical efficacy of our algorithms on \gi{} instances
drawn from two scenarios which were used to evaluate the prior state-of-the-art planner \iriscli{}~\cite{fu2021computationally}.
The first is planning inspection for a bridge using a UAV (the ``drone'' scenario), and the second is
planning inspection of the inside of a patient's pleural cavity using a continuum medical robot (the ``crisp'' scenario).
We implemented our algorithms, \dpipa{} (Dynamic Programming) and \ilpipa{} (ILP),
where IPA stands for Inspection Planning Algorithm.
We show (see~\Cref{sec:compare-to-iris} and~\Cref{fig:headline}) that on \gi{} instances of sizes similar to those used by~\cite{fu2021computationally},
\ilpipa{} produces walks with lower weight and higher coverage than those produced by \iriscli{}. Indeed, \ilpipa{} can produce walks with perfect coverage, even on
much larger instances. However, for these larger instances \dpipa{} provides a compelling alternative, producing walks with much lower weight, with some sacrifice in coverage. 

In summary, this work takes steps toward the application of \gi{} as a problem formalization for inspection planning,
and importantly provides (i) multiple novel algorithms with quality guarantees, (ii) an extensive discussion of
methods used to implement these ideas in practice, and (iii) reusable software which outperforms the state-of-the-art on
two relevant scenarios from the literature.\looseness-1

\section{Preliminaries}\label{sec:prelims}

All graphs $G = (V, E)$ in this work are undirected, unless explicitly stated otherwise.
We denote the vertices and edges of~$G$ by~$V$ and~$E$, respectively.
When it is clear which graph is referenced from context,
we write $n=\abs{V}$ for the number of vertices and $m=\abs{E}$ for the number of edges in the graph.
An edge-weight function $w$ is a function $w \colon E \rightarrow \R_{\geq 0}$.
A (simple) \emph{path}~$P = v_1, v_2, \dots, v_t$ in $G = (V,E)$ is a sequence of vertices such that for every $i < t$, it holds that~$v_i v_{i+1} \in E$ and no vertex appears more than once in $P$.
If we relax the latter requirement, we call the sequence~$P$ a \emph{walk}.
A walk is \emph{closed} if it starts and ends in the same vertex.

The \emph{weight} of a walk is the sum of weights of its edges.
For vertices $u,v \in V$, we let $d(u,v)$ denote the \emph{distance}, that is, the minimum weight of any path between~$u$ and~$v$.
For a set $S \subseteq V$, let $d(v,S)=d(S,v)$ be~$\min_{u \in S}d(u,v)$.
Moreover, we denote the graph induced by~$S$ by~$G[S]$ and we use~$G-S$ as a shorthand for~$G[V \setminus S]$.
When~$S$ only contains a single vertex~$v$, then we also write~$G-v$ instead of~$G-\{v\}$.
We refer to the textbook by Diestel~\cite{diestel2005graph} for an introduction to graph theory.

We use $\col(v)$ to denote the \emph{colors} (or \emph{labels}) of a vertex (which, with nuance described below, correspond to POIs in the robot's workspace), and we write $\col(S)$ to denote $\bigcup_{v \in S}\col(v)$.
For a set $S$, we write $2^S$ for the power set of $S$.
We next define the main problem we investigate in this paper.

\begin{problembox}{Graph Inspection}
    \Input & An undirected graph~$G=(V,E)$, a set~$\colorset$ of colors,
    an edge-weight function~$w \colon E \rightarrow \R_{\geq 0}$,
    a coloring function~$\col \colon V \rightarrow 2^\colorset$,
    a start vertex~$s \in V$, and an integer~$t$.\\
    \Prob  & Find a closed walk $P= (v_0,v_1,\ldots,v_{p})$ in $G$
    with $v_0=v_p=s$ and~$|\bigcup_{i=1}^p \col(v_i)|\geq t$
    minimizing~$\sum_{i=1}^{p} w(v_{i-1} v_i)$.\\
  \end{problembox}

Note that $t$ is the minimum number of colors to collect.
For the sake of simplicity, we may assume that~$G$ is connected, $t \leq |C|$, and~$\col(s) = \emptyset$.

\paragraph{Parameterized complexity.} A \emph{parameterized} problem is a tuple~$(I, k)$ where $I \in \Sigma^*$ is the input instance for a set of strings $\Sigma^*$ and $k \in \mathbb N$ is a \emph{parameter}.  A parameterized problem is \emph{fixed-parameter tractable} (FPT) if there exists an algorithm solving every instance~$(I, k)$ in $f(k) \cdot \text{poly}(|I|)$ time, where $f$ is any computable function.  We refer to the textbook by Cygan et al.~\cite{cygan2015parameterized} for an introduction to parameterized complexity theory.
The goal of parameterized algorithms is to capture the exponential (or even more costly) part of the problem complexity within a parameter, making the rest of the computation polynomial in the input size.

\section{Graph Search}\label{sec:methods}

In this section, we present two algorithms for solving \gi, along with
strategies for finding upper and lower bounds on the optimal solution.

\subsection{Dynamic Programming Algorithm}\label{sec:fpt}

We begin by establishing that \gi~is fixed-parameter tractable with respect to the number of colors by giving a dynamic programming algorithm we refer to as \dpipa{}. 

\begin{theorem}
    \label{thm:DP}
    \gi{} can be solved in~$\Oh((2^{\colorsetsize}(n+ \colorsetsize)+ m + n \log n )n)$ time,
    where~$n=|V|$, $m=|E|$, and $\colorset$ is the set of colors.
\end{theorem}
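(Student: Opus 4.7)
The plan is to implement a Held--Karp style dynamic program in which the usual ``subset of visited vertices'' is replaced by a ``subset of collected colors.'' Because \gi{} permits a \emph{walk} rather than a simple path, we may revisit vertices freely, and so the state of a partial solution can be summarised by a pair $(S, v)$, where $S \subseteq \colorset$ records which colors have been collected and $v \in V$ is the current endpoint.

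First, I would preprocess by running Dijkstra's algorithm from every vertex to obtain an all-pairs distance table $d(u, v)$; with Fibonacci heaps this takes $\Oh(n(m + n \log n))$. I would then define $D[S][v]$ to be the minimum weight of a walk from $s$ to $v$ whose set of collected colors contains $S$, take $D[\emptyset][v] := d(s, v)$ as the base layer, and establish the recurrence
\[
  D[S][v] \;=\; \min_{u \in V}\bigl(D[S \setminus \col(v)][u] + d(u, v)\bigr).
\]
The upper bound is immediate from concatenating a walk that witnesses $D[S \setminus \col(v)][u]$ with a shortest $u$--$v$ path, since the result collects at least $(S \setminus \col(v)) \cup \col(v) \supseteq S$. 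The lower bound follows by splitting any optimal walk for $D[S][v]$ at its penultimate vertex $u^\star$ and combining the monotonicity $T_1 \subseteq T_2 \Rightarrow D[T_1][u] \leq D[T_2][u]$ with $w(u^\star v) \geq d(u^\star, v)$.

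The recurrence is processed layer by layer in order of $|S|$. The main subtlety, and likely the main obstacle to writing cleanly, is that when $\col(v) \cap S = \emptyset$ we have $S \setminus \col(v) = S$, so the recurrence degenerates into a within-layer dependency. I would handle each layer $S \neq \emptyset$ with two passes: first compute $D[S][v]$ for every $v$ with $\col(v) \cap S \neq \emptyset$, reading values directly from strictly smaller layers in $\Oh(n)$ per cell; then, for each remaining vertex $v$, set $D[S][v] = \min_u D[S][u] + d(u, v)$ using only the values just computed. Correctness of the second pass reduces to a short triangle-inequality argument: in an optimal walk for such a $v$, the \emph{last} vertex at which a color of $S$ is added is some $u^\star$ with $\col(u^\star) \cap S \neq \emptyset$, so the walk splits into a prefix realising $D[S][u^\star]$ followed by a sub-walk from $u^\star$ to $v$ of weight at least $d(u^\star, v)$.

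The final answer is $\min\{D[S][s] : |S| \geq t\}$, and the actual closed walk is reconstructed using standard parent pointers maintained during both the DP and the APSP computations. Summing the costs, the $2^{\colorsetsize}$ layers each need $\Oh(n^2)$ real-valued minimisations plus $\Oh(n \colorsetsize)$ bit operations to evaluate the sets $S \setminus \col(v)$, which together with the APSP preprocessing yields the claimed bound $\Oh((2^{\colorsetsize}(n + \colorsetsize) + m + n \log n)\, n)$.
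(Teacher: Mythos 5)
Your proposal matches the paper's proof in all essentials: the same APSP preprocessing via $n$ runs of Dijkstra's algorithm, the same table indexed by a pair (subset of collected colors, current endpoint), the same recurrence over the metric completion with the same correctness argument (concatenation for the upper bound, splitting at the penultimate/last color-collecting vertex for the lower bound), and the same runtime accounting. The only divergence is cosmetic: where you resolve the degenerate case $S \cap \col(v) = \emptyset$ with a second within-layer pass, the paper simply restricts the table to entries with $S \cap \col(v) \neq \emptyset$, justified by the observation that in the complete metric graph an optimal walk may be assumed to collect a new color at every step; both devices are sound and yield the claimed bound.
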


\begin{proof}
  Recall that we may assume $\chi(s)=\emptyset$.
  Otherwise, we can collect all colors in~$\chi(s)$ for free;
  removing the colors~$\chi(s)$ from the coloring function and decreasing $t$ by~$|\chi(s)|$
  gives an equivalent instance.
  We solve \gi{} using dynamic programming.
  First, we compute the all-pairs shortest paths of $G$ in~$\Oh(nm+n^2\log n)$ by
  $n$ calls of Dijkstra's algorithm ($\Oh(m+n\log n)$ time) using a Fibonacci heap.
  Note that the new distance function~$w'$ is complete and metric.
  Hence, we may assume that an optimal solution collects at least one new color in each step
  (excluding the last step where it returns to~$s$).
  Hence, we store in a table~$T[v,S]$ with~$v \in V \setminus \{s\}$ and~$S \subseteq \colorset$, where~$S$ contains at least one color in~$\col(v)$, the length of a shortest walk that starts in~$s$, ends in~$v$, and collects (at least) all colors in~$S$.
  We fill~$T$ by increasing size of~$S$ by the recursive relation:
  \[T[v,S] = \begin{cases}
      \infty & \text{ if } S \cap \col(v) = \emptyset \\
      \min_{u\in V}T[u,S \setminus \col(v)] + w'(uv) & \text{ otherwise.}
    \end{cases}
  \]
  Therein, we assume that~$T[s,S] = 0$ if~$S = \emptyset$ and~$T[s,S]=\infty$, otherwise.
  We will next prove that the table is filled correctly.
  We do so via induction on the size of~$S$.
  To this end, assume that~$T$ was computed correctly for all entries where the respective set~$S$ has size at most~$i$.
  Now consider some entry~$T[v,S]$ where~${S \cap \col(v) \neq \emptyset}$ and~$|S| = i+1$.
  Let~$\ell'$ be the value computed by our dynamic program and let~$\opt$ be the length of a shortest walk that starts in~$s$, ends in~$v$, and collects all colors in~$S$.
  It remains to show that~$\ell' = \opt$ and to analyze the running time.
  We first show that~$\ell' \leq \opt$.
  To this end, let~${W = (s,v_1,\ldots,v_p=v)}$ be a walk of length~$\opt$ that collects all colors in~$S$.
  If~$p = 1$, then~$S \subseteq \col(v)$ and~$\ell' \leq T[s,\emptyset] + w'(sv) = w'(sv)$.
  Moreover, the shortest path from~$s$ to~$v$ has length~$w'(sv)$ and hence~$\ell' \leq w'(sv) \leq \opt$.
  If~$p > 1$, then~$W' = (s,v_1,\ldots,v_{p-1})$ is a walk from~$s$ to~$v_{p-1}$ that collects all colors in~$S' = S \setminus \col(v)$.
  Hence, by construction~$T[v_{p-1},S'] \leq \opt - w'(v_{p-1} v)$ and hence~$\ell' = T[v,S] \leq T[v_{p-1},S'] + w'(v_{p-1} v) \leq \opt$.

  We next show that~$\ell' \geq \opt$.
  To this end, note that whenever~$T[v,S]$ is updated, then there is some vertex~$u$ such that~${T[v,S] = T[u,S \setminus \col(v)] + w'(uv)}$ (where possibly~$u = s$ and~$S \setminus \col(v) = \emptyset$).
  By induction hypothesis, there is a walk from~$s$ to~$u$ that collects all colors in~$S \setminus \col(v)$ of length~$T[u,S\setminus \col(v)]$.
  If we add vertex~$v$ to the end of this walk, we get a walk of length~$\ell'$ that starts in~$s$, ends in~$v$, and collects all colors in~$S$.
  Thus, $\opt \leq \ell'$.

  After the table is completely filled for all color sets~$S$ with~$|S| \leq t$, then we just need to check whether there exists a vertex~$v \in V\setminus \{s\}$ and a set~$S$ with~$|S| = t$ such that~$T[v,S] + w'(vs) \leq \ell$.
  Note that the number of table entries is~$2^{\colorsetsize}n$, computing one table entry takes~$\Oh(n+ \colorsetsize)$~time, and the final check in the end takes~$\Oh(2^{\colorsetsize}n)$ time.
  Thus, the overall running time of our algorithm is in~$\Oh(nm + n^2\log(n)+2^{\colorsetsize}(n+\colorsetsize)n) = \Oh((2^{\colorsetsize}(n+\colorsetsize)+m+n\log n)n)$.
  \qed
\end{proof}

\subsection{Integer Linear Programming Algorithm}\label{sec:ilp-theory}

In this section, we present \ilpipa, an Integer Linear Programming (ILP) formulation of the problem,
inspired by the flow-based technique for \TSPshort~\cite{cohen2019severalgraph}.
As a (trivially checkable) precondition, we require that a solution walk includes at least two vertices.
We observe that there is a simpler formulation if the input is a complete metric graph; however, in practice, creating the completion and applying this approach significantly degrades performance because of the runtime's dependence on the number of edges. 

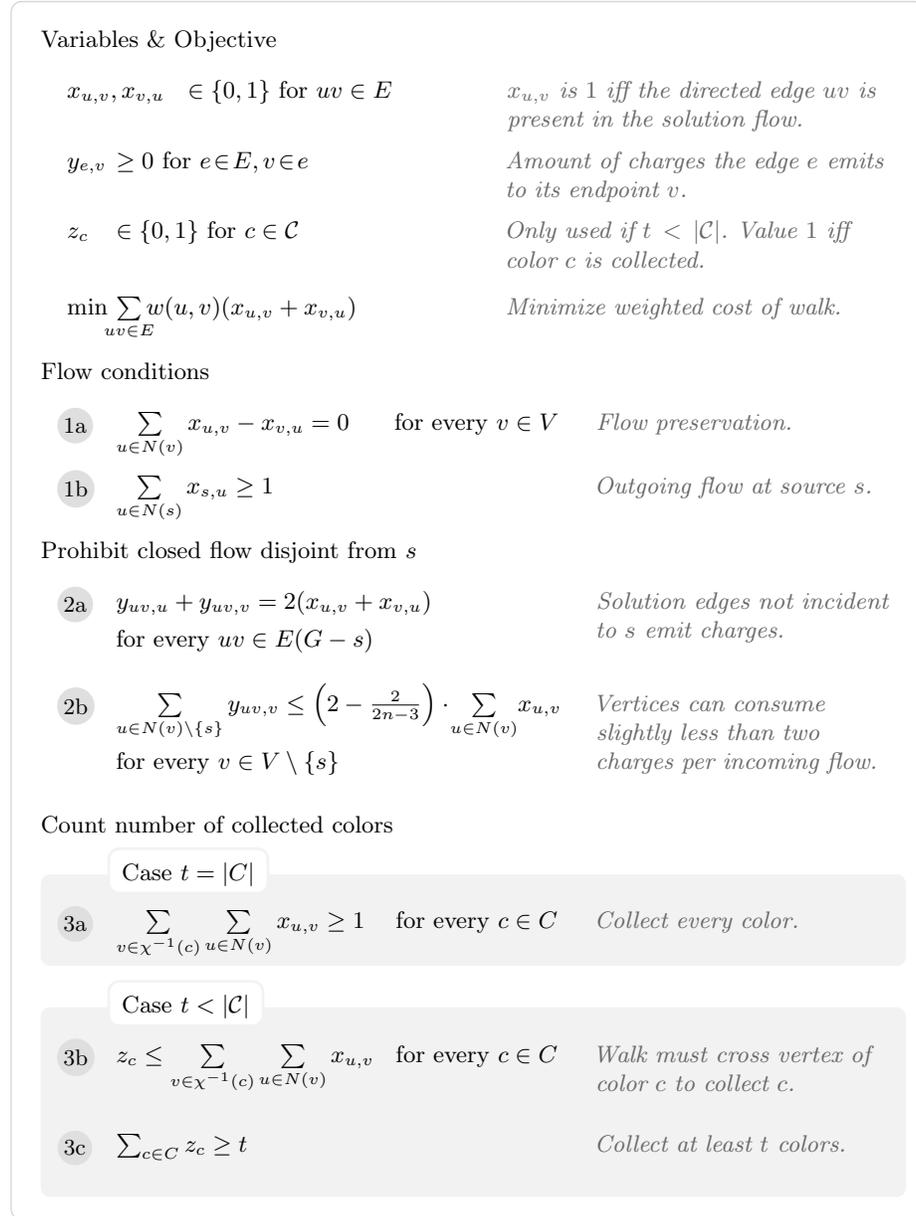
\begin{figure}[h!]
\vspace{-1em}
    \begin{GrayBox}{}
  \begin{tikzpicture}[node distance=1cm]
      \tikzset{
          every node/.style={{anchor=south west}, text depth=0pt},
          const/.style={},
          constLabel/.style={circle,fill=gray!25,inner sep=1.2pt, scale=0.95, font={\small}},
          comment/.style={font={\footnotesize\itshape},text width=3.9cm,color=black!60},
          longcomment/.style={font={\footnotesize\itshape},text width=5cm,color=black!60},
          headline/.style={font={}},
          mult/.style={font={\small}}
      }
      \tikzmath{
          \gridlineZero = 0; 
          \gridlineNumbers = 0.4;
          \gridlineVars = \gridlineNumbers-0.05;
          \gridlineConstraint = 1;
          \gridlineObj = 2;
          \gridlineMult = 4.7;
          \gridlineLongComment = 6.2;
          \gridlineComment = 7.4;
          \y = 0; 
          \lh = 0.85;
          \debugheight = 14;
      } 


      \draw (\gridlineZero, \y) node[headline] (headVars) {Variables \& Objective};

      \tikzmath{ \y = \y - .8*\lh; }
      \draw (\gridlineVars, \y) node {$x_{u,v}, x_{v,u}$};
      \draw (\gridlineConstraint, \y) node {\hspace*{7ex}$\in \{0,1\}$ \footnotesize for $uv \in E$};
      \draw (\gridlineLongComment, \y) node[longcomment] {$x_{u,v}$ is $1$ iff the directed edge $uv$ is present in the solution flow.}; 

      \tikzmath{ \y = \y - 1.1*\lh; }
      \draw (\gridlineVars, \y) node {$y_{e,v}$};
      \draw (\gridlineConstraint, \y) node {$\geq 0$ \footnotesize for $e \!\in\! E, v \!\in\! e$};
      \draw (\gridlineLongComment, \y) node[longcomment] {Amount of charges the edge~$e$ emits to its endpoint~$v$.};

      \tikzmath{ \y = \y - 1.1*\lh; }
      \draw (\gridlineVars, \y) node {$z_c$};
      \draw (\gridlineConstraint, \y) node {$\in \{0,1\}$ \footnotesize for $c \in \colorset$};
      \draw (\gridlineLongComment, \y) node[longcomment] {Only used if~$t < \colorsetsize$. Value~$1$ iff color~$c$ is collected.};

      \tikzmath{ \y = \y - 1.2*\lh; }

      \draw (\gridlineVars, \y) node (obj) {$
             \min \!\! \sum\limits_{uv \in E} \!\!\! w(u, v) (x_{u,v} + x_{v,u})
      $};    
      \draw (\gridlineLongComment,\y) node[longcomment] {Minimize weighted cost of walk.};    

      \tikzmath{ \y = \y - 1*\lh; } 
      \draw (\gridlineZero, \y) node[headline] (headflow) {Flow conditions};

      \tikzmath{ \y = \y - 0.8*\lh; } 
      \draw (\gridlineNumbers,\y) node[constLabel] {1a};
      \draw (\gridlineConstraint,\y) node[const] {$
          \sum\limits_{u \in N(v)} x_{u,v} - x_{v,u} = 0
      $};
      \draw (\gridlineMult,\y) node[mult] {
          for every $v \in V$
      };
      \draw (\gridlineComment,\y) node[comment] {Flow preservation.};    

      \tikzmath{ \y = \y - \lh; } 
      \draw (\gridlineNumbers,\y) node[constLabel] {1b};
      \draw (\gridlineConstraint,\y) node {$
          \sum\limits_{u \in N(s)} x_{s,u} \geq 1
      $};    
      \draw (\gridlineComment,\y) node[comment] {Outgoing flow at source~$s$.};

      \tikzmath{ \y = \y - 1*\lh; } 
      \draw (\gridlineZero, \y) node[headline] (headflow) {Prohibit closed flow disjoint from~$s$};

      \tikzmath{ \y = \y - 0.8*\lh; } 
      \draw (\gridlineNumbers,\y) node[constLabel] {2a};
      \draw (\gridlineConstraint,\y) node {$\displaystyle
          y_{uv,u} + y_{uv,v} = 2(x_{u,v} + x_{v,u})
      $};    
      \draw (\gridlineComment,\y) node[comment] {Solution edges not incident to $s$ emit charges.};
      \tikzmath{ \y = \y - .6*\lh; } 
      \draw (\gridlineConstraint,\y) node[mult] {
          for every $uv \in E(G - s)$
      };    
   
      \tikzmath{ \y = \y - 1*\lh; } 
      \draw (\gridlineNumbers,\y) node[constLabel] {2b};
      \draw (\gridlineConstraint,\y) node {$
          \sum\limits_{u\in N(v) \setminus \{s\}} y_{uv,v} \leq \left(2 - \frac{2}{2n-3}\right)\cdot  \! \sum\limits_{u \in N(v)} \! x_{u,v}
      $};       
      \draw (\gridlineComment,\y) node[comment] {
          Vertices can consume slightly less than two charges per incoming flow.
      }; 
      \tikzmath{ \y = \y - 0.9*\lh; } 
      \draw (\gridlineConstraint,\y) node[mult] {
          for every $v \in V \setminus \{s\}$
      };

      \tikzmath{ \y = \y - \lh; } 
      \draw (\gridlineZero, \y) node[headline] (headcol) {Count number of collected colors};

      \tikzmath{ \y = \y - .5*\lh; } 
      
      \tikzmath{ \y = \y - \lh; } 
      \draw (\gridlineNumbers,\y) node[constLabel] (rectAStart) {3a};
      \draw (\gridlineConstraint,\y) node {$
          \sum\limits_{v \in \col^{-1}(c)} \sum\limits_{u \in N(v)} x_{u,v} \geq 1
      $};    
      \draw (\gridlineMult2,\y) node[mult] {
          for every $c \in C$
      };
      \draw (\gridlineComment,\y) node[comment] (rectAEnd) {
          Collect every color.
      }; 

      \begin{pgfonlayer}{bg} 
          \draw[white,thick,fill=black!5,rounded corners] ($(rectAStart.north west)+(-0.3,0.5)$)  rectangle ($(rectAEnd.south east)+(0.1,-0.4)$);

          \coordinate (temp) at (rectAStart.north -| \gridlineConstraint,0);
          \draw ($(temp)+(0,0.15)$) node[rectangle,fill=white,rounded corners,inner sep=5pt,draw=black!5,line width=1pt] {\small Case $t = |C|$};        
      \end{pgfonlayer}
      
      \tikzmath{ \y = \y - 1.1*\lh; } 

      \tikzmath{ \y = \y - \lh; } 
      \draw (\gridlineNumbers,\y) node[constLabel] (rectBStart) {3b};
      \draw (\gridlineConstraint,\y) node {$
          z_c \leq \sum\limits_{v \in \col^{-1}(c)} \sum\limits_{u \in N(v)} x_{u,v} 
      $};    
      \draw (\gridlineMult2,\y) node[mult] {
          for every $c \in C$
      };    
      \draw (\gridlineComment,\y) node[comment] {
          Walk must cross vertex of color~$c$ to collect~$c$.
      }; 

      \tikzmath{ \y = \y - .4*\lh; } 

      \tikzmath{ \y = \y - \lh; } 
      \draw (\gridlineNumbers,\y) node[constLabel] {3c};
      \draw (\gridlineConstraint,\y) node {$
          \sum_{c \in C} z_c \geq t
      $};     
      \draw (\gridlineComment,\y) node[comment] (rectBEnd) {
          Collect at least~$t$ colors.
      }; 

      \begin{pgfonlayer}{bg} 
          \draw[white,thick,fill=black!5,rounded corners] ($(rectBStart.north west)+(-0.3,0.5)$)  rectangle ($(rectBEnd.south east)+(0.1,-0.5)$);

          \coordinate (temp) at (rectBStart.north -| \gridlineConstraint,0);
          \draw ($(temp)+(0,0.15)$) node[rectangle,fill=white,rounded corners,inner sep=5pt,draw=black!5,line width=1pt] {\small Case $t < \colorsetsize$};
      \end{pgfonlayer}
  \end{tikzpicture}
  \end{GrayBox}
  \vspace{-1.5em}
  \caption{\ilpipa, an ILP for the \textsc{Graph Inspection} problem.}\label{fig:ilp-formulation}
\end{figure}

Our ILP formulation for \gi{} can be found in~\Cref{fig:ilp-formulation}.
Intuitively, the \emph{flow amount} at a vertex encodes the number of occurrences of the vertex in a walk.
Constraints (1a) and (1b) implement flow conditions, and
(2a) and (2b) ensure that the flow originates at vertex $s$.
An edge included in a solution and not touching $s$ emits $2$ \emph{charges}, and
the charges are distributed among the edge's endpoints.
If every solution edge is part of a walk from $s$,
then a charge consumption at each vertex can be slightly less than $2$ per incoming flow.
There are $\Oh(\colorsetsize + m)$ constraints if $t=\colorsetsize$, and $\Oh(\colorsetsize m)$ constraints if $t < \colorsetsize$.

\subsubsection*{Correctness.}
Before showing the correctness of the ILP formulation,
we characterize solution walks for \gi.
In the following, we view a solution walk as a sequence of directed edges.
For a walk $P=v_0 v_1 \ldots v_\ell$, we write $|P|$ for the number of edges in $P$,
i.e. $|P|=\ell$, and $E(P)$ for the set of \emph{directed} edges in the walk,
i.e. $E(P)=\{v_{i-1}v_i \mid 1 \leq i \leq \ell \}$.
We write $w(P)$ for the length of the walk, that is, $w(P) := \sum_{uv \in E(P)} w(u,v)$. We now prove a simple lemma.

\begin{lemma}\label{lem:distinct-edges}
  For any feasible instance of \textnormal{\gi},
  there exists an optimal solution without repeated directed edges.
\end{lemma}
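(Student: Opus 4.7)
The plan is to pick an optimal solution $W^*$ that, among all optimal walks, uses the minimum number of edges, and then show that such a $W^*$ cannot repeat any directed edge. Suppose for contradiction that $W^*$ traverses some directed edge $(u,v)$ at least twice. I would decompose $W^* = A \cdot (u,v) \cdot B \cdot (u,v) \cdot C$, where $A$ is a subwalk from $s$ to $u$, $B$ is a subwalk from $v$ back to $u$, and $C$ is a subwalk from $v$ back to $s$. Note $B$ must contain at least one edge, since $u \neq v$.

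The crucial step exploits that $G$ is undirected: the reverse $B^{R}$ of $B$ is itself a walk in $G$, going from $u$ to $v$ and using exactly the same undirected edges as $B$. I then consider $W'' := A \cdot B^{R} \cdot C$, which is a well-defined closed walk at $s$ because $A$ ends at $u$ (matching the start of $B^{R}$) and $B^{R}$ ends at $v$ (matching the start of $C$). Moreover $W''$ visits the same set of vertices as $W^*$ (reversing $B$ preserves its vertex set, and the endpoints $u,v$ of the dropped edges remain as the last vertex of $A$ and the first vertex of $C$), so $W''$ collects exactly the same colors and is feasible. A direct computation gives $w(W'') = w(W^*) - 2 w(uv) \leq w(W^*)$ by non-negativity of edge weights. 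Optimality of $W^*$ forces equality (and in fact $w(uv) = 0$), so $W''$ is also optimal, yet uses strictly fewer edges than $W^*$, contradicting the minimal-edge-count choice of $W^*$.

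The main obstacle is really just bookkeeping: writing the decomposition cleanly and checking that $A \cdot B^{R} \cdot C$ is indeed a walk that preserves the collected color set. Once that is in place, the weight inequality and contradiction are immediate. The argument does rely essentially on $G$ being undirected so that $B^{R}$ is a legal walk; the statement does not automatically transfer to directed inputs.
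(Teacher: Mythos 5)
Your proof is correct and takes essentially the same route as the paper: the paper likewise fixes an optimal solution minimizing the number of edges, writes it as $s P_1 u v P_2 u v P_3 s$, and replaces the middle segment by its reversal $\overline{P_2}$ to obtain a feasible closed walk on the same vertex set of weight $w(P) - 2w(u,v) \leq w(P)$ with strictly fewer edges, contradicting the minimal-edge-count choice. The only cosmetic difference is that the paper phrases the contradiction as $w(P') > w(P)$ versus $w(P') \leq w(P)$ rather than via your "equality forces $w(uv)=0$" remark; the substance is identical.
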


\begin{proof}
  Assume not, and let $P = s P_1 uv P_2 uv P_3 s$ be a solution minimizing $|P|$.
  Consider a walk $P' = s P_1 u \overline{P_2} v P_3 s$,
  where $\overline{P_2}$ is the reversed walk of $P_2$.
  Since~${|P'|<|P|}$ and both visit the same set of vertices,
  we have that~$w(P')>w(P)$ by our choice of~$P$.
  However,
  $w(P')=w(sP_1u) + w(u\overline{P_2}v) + w(vP_3s) \leq w(sP_1u) + w(u,v) + w(vP_2u) + w(u,v) + w(vP_3s) = w(P)$,
  a contradiction.
  \qed
\end{proof}

The following is a simple observation.

\begin{observation}\label{lem:eulerian}
    Given an instance of \textnormal{\gi{}},
    there exists a closed walk $P$ of length $\hat{w}$ visiting vertices $V' \subseteq V(G)$
    if and only if
    there exists a connected Eulerian multigraph $G'=(V', E')$
    such that $\hat{w} = \sum_{uv \in E'}w(uv)$,
    where $E'$ is the multiset of the edges in $P$.
\end{observation}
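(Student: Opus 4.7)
The plan is to reduce both directions of the biconditional to Euler's classical theorem: a connected multigraph admits an Eulerian circuit if and only if every vertex has even degree. The bridge between the two formulations is that the multiset of edges traversed by a closed walk is, by definition, the edge multiset of a multigraph, and this multiset retains the same sum of weights.

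For the forward direction, given a closed walk $P = v_0 v_1 \ldots v_p$ with $v_0 = v_p$ visiting exactly the vertices of $V'$, I would set $G' = (V', E')$ where $E'$ is the multiset $\{v_{i-1} v_i : 1 \leq i \leq p\}$. Connectedness is immediate from $P$ itself: any two vertices in $V'$ appear in $P$ and are joined by the intervening sub-walk, which is a walk in $G'$. For the Eulerian property I would count the degree of each $v \in V'$ by occurrences in $P$: each internal occurrence $v_i = v$ with $1 \leq i \leq p-1$ contributes the two incident edges $v_{i-1} v_i$ and $v_i v_{i+1}$ to $\deg_{G'}(v)$, while the closed endpoint $v_0 = v_p$ contributes one edge at each end, again yielding an even total. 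The weight identity $\hat{w} = \sum_{uv \in E'} w(uv)$ is immediate from the definition of $E'$.

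For the reverse direction, given a connected Eulerian multigraph $G' = (V', E')$, Euler's theorem yields an Eulerian circuit; reading off the vertex sequence along this circuit produces a closed walk $P$ in $G$ that uses every edge of $E'$ exactly once, has weight $\sum_{uv \in E'} w(uv) = \hat{w}$, and visits every vertex of $V'$ (since $G'$ is connected and every edge is consumed).

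The main subtlety, and likely the only place a careful reader will want details, is the multigraph bookkeeping in the forward direction: if an edge $uv \in E(G)$ is traversed $k$ times by $P$, then $E'$ must contain $k$ parallel copies of $uv$, with each copy contributing independently to the degree count and to the sum $\sum_{uv \in E'} w(uv)$. Once this convention is fixed, both directions are short and require no machinery beyond Euler's theorem, so I expect the proof to be essentially a one-paragraph verification.
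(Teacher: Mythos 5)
Your proof is correct, and the multigraph bookkeeping you flag (parallel copies for repeated traversals, each contributing to degree and weight) is exactly the right convention. The paper states this as ``a simple observation'' and gives no proof at all, so there is nothing to compare against beyond noting that your argument via Euler's theorem --- even degrees from pairing the in- and out-edges at each occurrence, connectedness from the sub-walks, and an Eulerian circuit for the converse --- is evidently the intended justification.
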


This leads to a structural lemma about solutions.

\begin{lemma}\label{lem:num-edges-bound}
  For any feasible instance of \textnormal{\gi},
  there exists an optimal solution with at most $2n-2$ edges.
  This bound is tight.
\end{lemma}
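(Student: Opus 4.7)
The plan is to identify an optimal walk $P$ with a connected Eulerian multigraph via Lemma~\ref{lem:distinct-edges} and Observation~\ref{lem:eulerian}: we obtain $G'=(V',E')$ on vertex set $s\in V'\subseteq V$, in which every undirected edge has multiplicity at most~$2$. Among all optimal walks, I fix one that additionally minimizes~$|E'|$. Writing $M_1$ and $M_2$ for the edges used with multiplicity $1$ and $2$, and $G''=M_1\cup M_2$, we have $|E'|=|M_1|+2|M_2|$, and since $|V'|\leq n$ it suffices to show $|E'|\leq 2(|V'|-1)$.

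First I would observe that every edge of $M_2$ is a bridge of~$G''$: if $e=uv\in M_2$ were not a bridge, then deleting both copies of~$e$ from~$E'$ yields a multigraph that is still connected (since $G''\setminus\{e\}$ is) and still Eulerian (both $u$ and $v$ lose exactly two incident edges, preserving even-degree parity), representing a walk of weight at most~$w(P)$ with strictly fewer edges, contradicting our minimality choice. Hence $M_2$ consists of bridges, and in particular forms a forest, so $|M_2|\leq|V'|-1$. With this in hand I would induct on~$|V'|$: if $M_2$ contains a bridge $e=uv$, then removing both copies of~$e$ separates $G'$ into two connected Eulerian sub-multigraphs $G'_A$ and $G'_B$ on disjoint vertex sets $A\ni u$ and $B\ni v$ with $A\cup B=V'$, each of which must itself be min-edge-optimal (a strictly smaller alternative on either side could be reattached through~$e$, contradicting the minimality of~$G'$). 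The inductive hypothesis then gives $|E'|=|E(G'_A)|+2+|E(G'_B)|\leq 2(|A|-1)+2+2(|B|-1)=2(|V'|-1)$.

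The main obstacle is the bridgeless case, where $M_2=\emptyset$ and $G'$ is a 2-edge-connected Eulerian simple graph for which minimality forces every cycle to be \emph{essential}, i.e., every cycle's edge-removal disconnects the graph. I would bound $|E'|$ here via an open ear decomposition of~$G'$ rooted at~$s$: an initial cycle of length~$\ell_0$ together with $k$ subsequent open ears of lengths $\ell_1,\dots,\ell_k$ contribute $|V'|=\ell_0+\sum_{i=1}^k(\ell_i-1)$ vertices and $|E'|=\ell_0+\sum_{i=1}^k\ell_i$ edges, so it suffices to prove $k=|E'|-|V'|\leq|V'|-2$. Using the cycle-essentiality condition, I would argue ear-by-ear that any short closing ear forms, together with an alternative path in the already-built subgraph, a cycle whose removal preserves connectivity and even-degree parity, contradicting minimality. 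Finally, tightness is witnessed by the path graph $v_0 v_1\cdots v_{n-1}$ with $s=v_0$, a single color placed at~$v_{n-1}$, and $t=1$: the unique feasible closed walk must traverse the path and return, using all $2(n-1)=2n-2$ edges.
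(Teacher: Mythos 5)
Your reduction to an edge-minimal connected Eulerian multigraph is sound, and the first two stages of your argument (every multiplicity-$2$ edge is a bridge of the underlying simple graph, hence $M_2$ is a forest; induction by splitting at such a bridge) are correct, if considerably more elaborate than necessary. The problem is the base case you yourself identify as the main obstacle: a $2$-edge-connected simple Eulerian graph in which every cycle is essential. There your proof stops being a proof. The ear-decomposition bound you need, $k=|E'|-|V'|\leq |V'|-2$, is asserted but not established, and the one step you sketch --- that ``any short closing ear forms, together with an alternative path in the already-built subgraph, a cycle whose removal preserves connectivity'' --- is precisely the point at issue: for an individual chord, the cycle it closes with a path of the current subgraph may very well disconnect the graph when deleted, which is exactly what edge-minimality permits. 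So the argument is circular as written, and the case that carries the real content of the lemma is left open.

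The gap closes immediately with a different choice of witness structure, which is what the paper does: fix any spanning tree $T$ of the Eulerian multigraph $H$ on vertex set $V'$ and consider the co-tree $H-E(T)$ (keeping multiplicities, so two parallel copies of an edge form a $2$-cycle). Any cycle $C$ contained in the co-tree can be deleted from $H$: connectivity is preserved because $T$ is untouched, and every vertex loses an even number of incident edges, so $H-C$ is still connected and Eulerian and encodes a no-heavier walk with fewer edges, contradicting minimality. Hence the co-tree is a forest, $|E(H)|\leq |E(T)|+(|V'|-1)\leq 2n-2$, with no case distinction on multiplicities, bridges, or $2$-edge-connectivity at all. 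I recommend replacing your entire decomposition with this one observation. Your tightness example (a path with $s$ at one end and a single color at the other) is correct and is a special case of the paper's example of a tree whose leaves carry unique colors.
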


\begin{proof}
  Let $P$ be an optimal solution with the minimum number of edges.
  From Observation \ref{lem:eulerian}, we may assume there exists a connected Eulerian multigraph $H$
  that encodes $P$.
  Since $H$ is connected, it has a spanning tree $T$ as a subgraph.
  Let $H'=H - E(T)$.
  If $H'$ contains a cycle $C$, then $H-C$ is also connected and Eulerian,
  as removing a cycle from a multigraph does not change the parity of the degree of each vertex.
  Hence, there exists a shorter solution $P'$ that is an Eulerian tour in $H-C$,
  a contradiction.
  Knowing that both $T$ and $H'$ are acyclic, we have $|E(H)|=|E(T)|+|E(H')|\leq 2n-2$.
  This is tight whenever $G$ is a tree where all leaves have a unique color.
  \qed
\end{proof}

Now we are ready to prove the correctness of the ILP formulation.

\begin{theorem}
  The ILP formulation in \Cref{fig:ilp-formulation} is correct.
\end{theorem}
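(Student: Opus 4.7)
The plan is to establish correctness by constructing a weight-preserving correspondence between feasible closed walks from $s$ satisfying the color requirement and feasible ILP solutions. Concretely, I would show (a) that every such walk $P$ yields a feasible ILP solution of objective $w(P)$, and (b) that every feasible ILP solution gives rise to such a walk whose weight equals the ILP objective. Together these imply that the ILP optimum coincides with the optimum of \gi.

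For direction (a), I would fix an optimal walk $P$ which, by \Cref{lem:distinct-edges} and \Cref{lem:num-edges-bound}, we may assume has no repeated directed edges and at most $2n-2$ of them. Setting $x_{u,v}=1$ exactly when the directed edge $uv$ appears in $P$, with $z_c$ set to indicate color collection, matches the ILP objective to $w(P)$ and immediately satisfies constraints (1a), (1b), and (3a)--(3c). The substantive step is producing nonnegative charges $y_{e,v}$ satisfying (2a) and (2b). With $x$ fixed, this reduces to a bipartite transportation feasibility problem: each non-$s$-incident edge must distribute $2(x_{u,v}+x_{v,u})$ units of charge between its endpoints, while each non-$s$ vertex $v$ has capacity $(2-2/(2n-3))f_v$ with $f_v = \sum_u x_{u,v}$. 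A Hall-type argument reduces feasibility to the inequality $\sum_{v \in S} f_v \leq (2n-3)\,\delta(S)$ for every $S \subseteq V \setminus \{s\}$, where $\delta(S)$ counts directed edges entering $S$ from $V \setminus S$. When $\delta(S) \geq 1$ this holds because $\sum_{v \in S} f_v \leq |E(P)| - f_s \leq 2n-3$ (using \Cref{lem:num-edges-bound} together with $f_s \geq 1$ from (1b)); when $\delta(S) = 0$, the walk never enters $S$ and both sides vanish.

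For direction (b), I would view $\{uv : x_{u,v}=1\}$ as a directed multigraph $H_x$. Constraint (1a) makes each weakly connected component of $H_x$ Eulerian. The crucial step is to rule out nonempty components of $H_x$ disjoint from $s$: if such a component $H$ existed, summing (2a) over its edges gives total emitted charge $2|E(H)|$, while summing (2b) over its vertices (whose incident edges all avoid $s$ and whose incoming flow originates entirely within $H$) bounds absorbed charge by $(2-2/(2n-3))|E(H)| < 2|E(H)|$, contradicting (2a). Hence every edge of $H_x$ lies in the connected component of $s$, which is Eulerian and thus admits an Eulerian circuit starting and ending at $s$ --- the required walk $P$. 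Its weight equals the ILP objective by symmetry of $w$, and (3a) or (3b)--(3c) guarantee that $P$ collects at least $t$ colors.

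I expect the main obstacle to be the charge-feasibility argument in direction (a): the numerical slack $2/(2n-3)$ in (2b) has to be matched precisely to the $|E(P)| \leq 2n-2$ bound from \Cref{lem:num-edges-bound} in order to verify the Hall-type inequality. Direction (b) is conceptually simpler --- the same slack is calibrated exactly so that a would-be floating Eulerian subgraph cannot balance its emitted charge against the absorption capacity of its vertices.
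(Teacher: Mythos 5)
Your proposal is correct, and at the top level it follows the same route as the paper: encode the walk's directed edges in the $x$-variables, verify the flow and color constraints directly, and use the charge constraints (2a)--(2b) to rule out circulations avoiding $s$ in the backward direction. The one genuine difference is how the forward direction handles the charges. The paper constructs them explicitly: it decomposes the walk into closed subwalks $P'$ through $s$ and assigns $y^{(P')}_{v'_{i-1}v'_i,\,v'_i}=\frac{2i}{|P'|-1}$ (with the complementary amount going to the tail), so that each non-$s$ vertex absorbs exactly $2-\frac{2}{|P'|-1}\le 2-\frac{2}{2n-3}$ per unit of incoming flow, the last inequality coming from \Cref{lem:num-edges-bound}. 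You instead prove the charges \emph{exist} by viewing (2a)--(2b) as a transportation-feasibility problem and checking the Gale--Hoffman/Hall cut condition; your reformulation as $\sum_{v\in S}f_v\le(2n-3)\,\delta(S)$ is algebraically equivalent, and your verification (via $\sum_{v\in S}f_v\le|E(P)|-f_s\le 2n-3$ when $\delta(S)\ge 1$, and the walk never entering $S$ when $\delta(S)=0$) leans on exactly the same $2n-2$ edge bound. The existence argument is less hands-on but more robust to the precise slack constant, while the paper's explicit formula is elementary and self-contained. In the backward direction your argument is essentially the paper's, stated slightly more cleanly: you sum (2b) over all vertices of a hypothetical $s$-free component $H$ and compare the total absorbed charge, at most $\left(2-\frac{2}{2n-3}\right)|E(H)|$, against the emitted $2|E(H)|$, whereas the paper phrases the same contradiction via pigeonhole; both correctly use that every solution edge incident to such a component lies inside it, so its charge must balance internally.
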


\begin{proof}
We show that we can translate a solution for \gi{} to
a solution for the corresponding ILP and vise versa.

For the forward direction, let $P=v_0 v_1 \ldots v_{\ell}$ with~$v_0=v_\ell=s$ be a solution walk with $\ell \geq 2$ collecting at least $k$ colors.
From \Cref{lem:distinct-edges},
we may assume that there are no indices $i,j$ such that $i<j$ and $v_{i}v_{i+1} = v_{j}v_{j+1}$.
For constraint~(1), we set $x_{u,v}=1$ if $uv \in E(P)$ and $0$ otherwise.
It is clear to see that all flow conditions are satisfied.
Moreover, observe that for any vertex $v \in V(G) \setminus \{s\}$,
the flow amount $\sum_{u \in N(v)}x_{u,v}$ corresponds to the number of occurrences of $v$ in~$P$,
which we denote by~$\deg_P(v)$.

Next, if $|P|=2$, then constraint (2) is trivially satisfied by setting $y_{e,v}=0$ for all $e,v$.
Otherwise, let $P'$ be a continuous part of~$P$ such that~$s$ appears only at the beginning and at the end.
Then, $P'$ contains $|P'|-2$ edges that do not touch~$s$ and emit two charges each.
We know that $|P'|-1 = \sum_{v \in V(P') \setminus \{s\}} \deg_{P'}(v)$.
For a directed edge $e \in E(P')$ and its endpoint $v \in e$,
let $y_{e,v}^{(P')}$ be part of $y_{e,v}$ charged only by $P'$.
We distribute the charges by setting $y_{v'_{i-1}v'_{i}, v'_{i-1}}^{(P')} = 2-\frac{2i}{|P'|-1}$ and
$y_{v'_{i-1}v'_{i}, v'_{i}}^{(P')}=\frac{2i}{|P'|-1}$ for every $1 \leq i < |P'|$,
where $P'=v_0' v_1' \ldots v_{|P'|}'$ with~$v'_0 = v'_{|P'|} = s$.

Note that~$\sum_{u \in N(v) \setminus \{s\}}y_{uv, v}^{(P')} =
\deg_{P'}(v) \cdot \frac{2(|P'|-2)}{|P'|-1}
= (2-\frac{2}{|P'|-1}) \cdot \deg_{P'}(v)
\leq (2-\frac{2}{2n-3})\cdot  \deg_{P'}(v)$
for every $v \in V(P') \setminus \{s\}$.
The last inequality is due to \Cref{lem:num-edges-bound}.
This inequality still holds when we concatenate closed walks $P'$ from $s$
since $\sum_{u \in N(v) \setminus \{s\}} y_{uv,v}
= \sum_{P'} \sum_{u \in N(v) \setminus \{s\}} y_{uv,v}^{(P')}$
and $\sum_{u \in N(v)}x_{u,v} = \sum_{P'} \deg_{P'}(v)$.
Constraint (2) is now satisfied.

Finally, in order to collect colors~$\col(v)$, there most be an edge $uv$ in the solution.
Notice that constraint (3b) encodes this and constraint (3c) ensures that we collect at least~$k$ distinct colors.
Finally, observe that the objective is properly encoded.

For the backward direction, we show that there cannot be a closed flow, i.e. \emph{circulation}, avoiding~$s$.
For the sake of contradiction, let $C$ be such a circulation.
Then, since~$x_{u,v}=1$ for every~$uv \in E(C)$,
we have $\sum_{uv=e \in E(C)} y_{e,u} + y_{e,v}=2|E(C)|$.
This is considered as the total charge emitted from $C$,
and it must be consumed by the vertices in $C$.
We have
$\sum_{v \in V(C)} \sum_{u \in N(v)} y_{uv, u} + y_{uv, v} \geq 2|E(C)|$,
and by the pigeonhole principle,
there must be a vertex $v \in V(C)$ such that its charge consumption is at least
$\deg(v)$, violating constraint~(2b).
Hence, there must be a closed walk from $s$ that realizes a circulation obtained by ILP.
From constraint~(3), the walk also collects at least $k$ colors.
\qed
\end{proof}

\subsubsection*{Solution recovery.}
A closed walk in a multigraph is called an \emph{Euler tour} if it traverses every edge of the graph
exactly once.
A multigraph is called \emph{Eulerian} if it admits an Euler tour.
It is known that a connected multigraph is Eulerian if and only if every vertex has even degree \cite{euler1741solutio} and given an Eulerian multigraph with $m$ edges, we can find an Euler tour in time $\Oh(m)$ \cite{hierholzer1873ueber}.

Given a certificate of an optimal solution for the aforementioned ILP, we construct a solution walk as follows.
First, let $D$ be the set of directed edges $uv$ such that $x_{u,v}=1$.
Next, we find an Euler tour $P$ starting from~$s$ using all the edges in $D$.
Then, $P$ is a solution for \prob{Graph Inspection}.

\subsection{Upper and Lower Bounds}\label{sec:theory-bounds}
When evaluating solutions, having upper and lower bounds on the optimal solution provides useful context. 
For \gi, a polynomial-time computable lower bound follows directly from the LP relaxation of the ILP in \Cref{sec:ilp-theory}.
For an upper bound, we consider \textsf{Algorithm ST} (Algorithm \ref{alg:alg-st}),
which uses a 2-approximation algorithm for \prob{Steiner Tree}\footnote{%
The \prob{Steiner Tree} problem takes a graph $G$ and a set of vertices $S$ (called \emph{terminals})
and asks for a minimum-weight tree in $G$ that spans $S$.}~\cite{kou1981fast} as a subroutine.
The algorithm proceeds by first choosing the vertices closest to~$s$ collecting $t$ colors
and then finding a Steiner tree of those vertices.
A closed walk can be obtained by using each edge of the Steiner tree twice.

\begin{algorithm}[H]
  $S \gets \{s\}$\\
  \While{$|\col(S)| < t$}{
      \tcp{Choose the vertex with a new color closest to $s$.}
      $S \gets S \cup \{\arg\min_{v \in V} d(s,v) \mid \col(v) \setminus \col(S) \neq \emptyset \}$
  }
  Compute a $2$-approximation $T$ for \prob{Steiner Tree} on $G$ with terminals $S$\\
  Construct a closed walk from $s$ using the all edges in $T$
  \caption{\textsf{Algorithm ST}}\label{alg:alg-st}
\end{algorithm}

\begin{restatable}{theorem}{restatealgorithmst}\label{thm:algorithm-st}
    \textnormal{\textsf{Algorithm ST}} returns a closed walk collecting at least $t$
    colors with length at most $t \cdot \textnormal{opt}$,
    where $\textnormal{opt}$ denotes the optimal walk length.
    The algorithm runs in time $\Oh(tm \log (n+t))$.
\end{restatable}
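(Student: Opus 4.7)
The plan is to verify three properties of the output: (1) it is a valid closed walk at~$s$ collecting at least $t$ colors, (2) its weight is at most $t \cdot \textnormal{opt}$, and (3) the procedure runs in the claimed time.

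Correctness is straightforward: the while loop exits only once $|\chi(S)| \geq t$, and $T$ is a Steiner tree spanning~$S$. Doubling every edge of $T$ yields a connected multigraph in which every vertex has even degree, so by \Cref{lem:eulerian} it admits an Euler tour starting at $s$. This tour is a closed walk that visits every vertex of $S$ and hence collects at least $t$ colors.

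The crux is (2). The first step is to prove the inequality $d(s, v_i) \leq \textnormal{opt}/2$ for each $v_i$ selected by the greedy loop. Let $S_{i-1}$ denote the terminal set just before $v_i$ is appended. Since $|\chi(S_{i-1})| < t$, an optimal walk $W^*$ of length $\textnormal{opt}$ must visit some vertex $u$ carrying a color $c \notin \chi(S_{i-1})$. Splitting $W^*$ at the first occurrence of $u$ into an $s$-$u$ subwalk of length $L_1$ and a $u$-$s$ subwalk of length $L_2$, we have $L_1 + L_2 = \textnormal{opt}$, so $d(s,u) \leq \min(L_1, L_2) \leq \textnormal{opt}/2$. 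The greedy rule then yields $d(s, v_i) \leq d(s, u) \leq \textnormal{opt}/2$. Because each $v_i$ introduces at least one new color, $|S \setminus \{s\}| \leq t$, so $\sum_i d(s, v_i) \leq (t/2) \cdot \textnormal{opt}$. Next I would observe that the star rooted at~$s$ formed by the shortest $s$--$v_i$ paths is a spanning tree of $S$ in the metric closure $G_c$ of $G$, with weight $\sum_i d(s, v_i)$. The MST-based 2-approximation of Kou et al.~\cite{kou1981fast} first builds an MST of $G_c$ and then lifts its edges to shortest paths in $G$, pruning to a subtree; none of these operations increase weight, so $w(T) \leq w(\textsf{MST}(G_c)) \leq w(\text{star}) \leq (t/2) \cdot \textnormal{opt}$. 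The Euler tour on the doubled tree therefore has length $2 w(T) \leq t \cdot \textnormal{opt}$, establishing (2).

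For (3), a single Dijkstra from $s$ followed by a sorted scan produces $S$ in $O(m + n \log n + tn)$ time, and the Steiner tree routine performs $O(|S|) = O(t)$ Dijkstras plus an MST and Euler-tour construction, dominating the total runtime at $O(t(m + n \log n)) = O(tm \log(n+t))$. The main obstacle is inside (2): a generic appeal to the ``$2$-approximation'' property only yields $w(T) \leq 2 w(T^*)$ and hence a walk length of $2t \cdot \textnormal{opt}$, so the proof must exploit the specific MST-in-metric-closure structure of the Kou et al.\ subroutine to compare $w(T)$ directly to the star and recover the tight factor of~$t$.
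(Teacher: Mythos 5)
Your proof is correct and follows essentially the same route as the paper's: bound the distance from $s$ to each greedily selected terminal by $\textnormal{opt}/2$, bound the Steiner tree by the star of shortest $s$-paths, and double the tree. The one point where you diverge is in fact where you are more careful than the paper: the paper's proof bounds the \emph{minimum} Steiner tree by $t\tilde d$ (with $\tilde d \leq \textnormal{opt}/2$ the $t$-th smallest color distance) and then asserts that the doubled output has length at most $2t\tilde d$, which tacitly requires that the tree returned by the Kou et al.\ subroutine---not merely the optimal Steiner tree---has weight at most $t\tilde d$. Your explicit comparison of the metric-closure MST against the star supplies exactly that missing justification; as you note, a black-box appeal to the $2$-approximation guarantee would only yield a walk of length $2t\cdot\textnormal{opt}$.
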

\begin{proof}
    Since the algorithm returns a walk including all vertices in $S$,
    it collects at least $t$ colors.
    Let~$d_c = \min_{v\in \chi^{-1}(c)} d(s,v)$ for every $c \in \colorset$.
    Then, let $\tilde{d}$ be the $t$-th smallest such value,
    and due to Steps 1-3, for every $u \in S$, we have $d(s,u) \leq \tilde{d}$.
    Since $|S| \leq t+1$, the weight of the minimum Steiner tree is at most $t \tilde{d}$,
    which results in that the length~$\ell'$ of the walk returned by our algorithm is at most~$2t \tilde{d}$.
    Now, suppose that $P$ is an optimal walk of length $\text{opt}$ collecting at least $t$ colors $\colorset'$.
    Then, it is clear to see that $\text{opt} \geq 2 \cdot d_c$ for any~$c \in \colorset'$.
    From $|\colorset'| \geq t$, we have $\text{opt} \geq 2 \tilde{d}$, which implies~$\ell' \leq t \cdot \text{opt}$.
  
  We next analyze the running time.
    Steps 1-3 takes $\Oh(m \log n + tn \log t)$ time for 
    sorting vertices and computing the union of colors.
    Step 4 can be done by computing the transitive closure on $S$,
    which takes $\Oh(tm \log n)$ time.
    Step 5 takes $\Oh(n+m)$ time, so the overall running time is in~$\Oh(tm \log (n+t))$.
  
    Lastly, we show that this bound cannot be smaller.
    Let $G$ be a star $K_{1,t+1}$ with $s$ being the center with no colors.
    One leaf~$u$ has $t$ colors, and each of the other $t$ leaves has a unique single color.
    Every edge has weight $1$.
    The optimal walk is $(s,u,s)$ and has length $2$,
    whereas the algorithm may choose $V \setminus \{u\}$ as $S$.
    This gives a walk of length $2t$.
    \qed
  \end{proof}

\section{Graph Simplification}\label{sec:simplification}
  
\begin{figure}[t]
  \centering

  \tikzstyle{small} = [circle, fill=white, text=black, draw, thick, scale=1, minimum size=0.3cm, inner sep=1.5pt, line width=0.5mm]

  \definecolor{kOrange} {RGB} {230, 159, 0}
  \definecolor{kSky} {RGB} {86, 180, 233}
  \definecolor{kGreen} {RGB} {0, 158, 115}
  \definecolor{kYellow} {RGB} {240, 228, 66}
  \definecolor{kBlue} {RGB} {0, 114, 178}
  \definecolor{kRed} {RGB} {220, 51, 17}
  \definecolor{kPurple} {RGB} {204, 121, 167}
  \definecolor{kGray} {RGB} {187, 187, 187}
  \definecolor{kLightGray} {RGB} {221, 221, 221}
  \definecolor{kDarkGray} {RGB} {85, 85, 85}

  \begin{minipage}[m]{.32\textwidth}
      \vspace{0pt}
      \centering
      \begin{tikzpicture}
          \node[small,fill=kBlue] (v11) at (1, 1) {};
          \node[small,fill=kOrange] (v12) at (1, 2) {};
          \node[small,fill=kGreen] (v13) at (1, 3) {};
          \node[small] (v22) at (2, 2) {};
          \node[small] (v23) at (2, 3) {$s$};
          \node[small,fill=kYellow] (v31) at (3, 1) {};
          \node[small,fill=kDarkGray] (v32) at (3, 2) {};
          \node[small,fill=kPurple] (v33) at (3, 3) {};

          \draw (v11) -- (v12);
          \draw (v12) -- (v13);
          \draw (v11) -- (v22);
          \draw (v22) -- (v23);
          \draw (v31) -- (v22);
          \draw (v31) -- (v32);
          \draw (v32) -- (v33);
          \draw (v13) -- (v23);
          \draw (v23) -- (v33);

          \node[] at (2, 3.8) {Original graph};
      \end{tikzpicture}
  \end{minipage}
  \hfill
  \begin{minipage}[m]{.32\textwidth}
    \vspace{0pt}
    \centering
    \begin{tikzpicture}
      \draw[line width=1.2mm,draw=kBlue] (2,3)--(1,3)--(1,1)--(1.9,2)--(1.9,3);
      \draw[line width=1.2mm,draw=kRed] (2,3)--(3,3)--(3,1)--(2.1,2)--(2.1,3);

      \node[small,fill=kBlue] (v11) at (1, 1) {};
      \node[small] (v12) at (1, 2) {};
      \node[small,fill=kGreen] (v13) at (1, 3) {};
      \node[small] (v22) at (2, 2) {};
      \node[small] (v23) at (2, 3) {$s$};
      \node[small,fill=kYellow] (v31) at (3, 1) {};
      \node[small] (v32) at (3, 2) {};
      \node[small,fill=kPurple] (v33) at (3, 3) {};

      \draw (v11) -- (v12);
      \draw (v12) -- (v13);
      \draw (v11) -- (v22);
      \draw (v22) -- (v23);
      \draw (v31) -- (v22);
      \draw (v31) -- (v32);
      \draw (v32) -- (v33);
      \draw (v13) -- (v23);
      \draw (v23) -- (v33);

      \draw[rounded corners, dashed] (0.7, 0.7) rectangle ++ (0.6, 2.6);
      \node[] at (0.4, 1) {$\mathcal{C}_1$};
      \draw[rounded corners, dashed] (2.7, 0.7) rectangle ++ (0.6, 2.6);
      \node[] at (3.6, 1) {$\mathcal{C}_2$};
      \node[] at (2, 3.8) {Color-reduced graph};
      \end{tikzpicture}
  \end{minipage}
  \hfill
  \begin{minipage}[m]{.32\textwidth}
    \vspace{0pt}
    \centering
    \begin{tikzpicture}
      \draw[line width=1.2mm,draw=kGreen] (2,3)--(1,3)--(1,1)--(2,2)--(3,1)--(3,3)--(2,3);

      \node[small,fill=kBlue] (v11) at (1, 1) {};
      \node[small] (v12) at (1, 2) {};
      \node[small,fill=kGreen] (v13) at (1, 3) {};
      \node[small] (v22) at (2, 2) {};
      \node[small] (v23) at (2, 3) {$s$};
      \node[small,fill=kYellow] (v31) at (3, 1) {};
      \node[small] (v32) at (3, 2) {};
      \node[small,fill=kPurple] (v33) at (3, 3) {};

      \draw (v11) -- (v12);
      \draw (v12) -- (v13);
      \draw (v11) -- (v22);
      \draw (v22) -- (v23);
      \draw (v31) -- (v22);
      \draw (v31) -- (v32);
      \draw (v32) -- (v33);
      \draw (v13) -- (v23);
      \draw (v23) -- (v33);

      \node[] at (2, 3.8) {Merged walk};
      \end{tikzpicture}
  \end{minipage}
  \caption[Example of the partition-and-merge framework]{%
  An illustration of the partition-and-merge framework.
  The color set in the original graph (left) is
  reduced to $2$ color sets $\mathcal{C}_1$ and $\mathcal{C}_2$,
  each of which contains $2$ colors (middle).
  For each color set, we find an optimal walk collecting all colors in the set,
  resulting in the blue and red walks.
  Those walks are merged into the green walk,
  collecting the same colors in the color-reduced graph (right).
  }
  \label{fig:partition-example}
  \vspace*{-1em}
\end{figure}
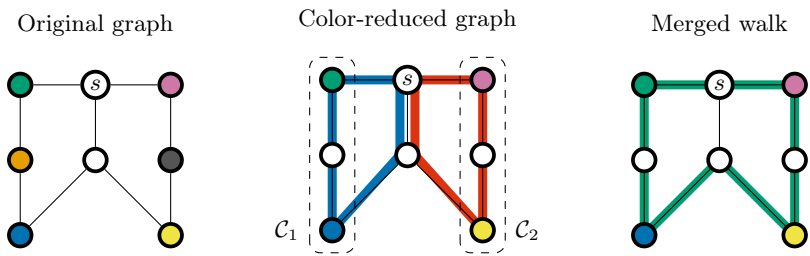

This section introduces strategies for transforming our exact algorithms into heuristics
with improved scalability,
including principled sub-sampling of colors (POIs) and creating plans by merging walks
which inspect different regions of the graph. 
\Cref{fig:partition-example} illustrates this idea, the partition-and-merge framework.

\subsection{Color Reduction}\label{sec:color-reduction}

The algorithms of Sections \ref{sec:fpt} and \ref{sec:ilp-theory} give us the ability to exactly solve \gi{}, but the
running time is exponential in the number of colors (i.e., POIs). In some applications, this may not be
prohibitive. In surgical robotics, a doctor may identify a small number of POIs which
must be inspected to enable surgical intervention. However in other settings, it is unrealistic to
assume that the number of colors is small. In the datasets we explore in~\Cref{sec:experiments} for instance, 
the POIs are drawn from a mesh of the object to be inspected, and we have no a priori information about the
relative importance of inspecting individual POIs.

To deal with this challenge, we find a ``representative'' set~$\colorset' \subseteq \colorset$
of colors, with~$|\colorset'| = k$ small enough that our FPT algorithms run efficiently on the instance in which vertex colors are defined by~$\col(v) \cap \colorset'$
for each vertex $v$. We can then find a minimum-weight walk $P$ on the color-reduced
instance and reconstruct the set of inspected colors by computing~$\bigcup_{v \in P} \col(v)$.

Formally, we assume that there exists some function $\colorsim \colon \colorset^2 \rightarrow \R_{\geq 0}$
which encodes the ``similarity'' of colors, that is, for colors $c_1, c_2, c_3$,
if ${\colorsim(c_1, c_2) < \colorsim(c_1, c_3)}$, then $c_1$ is more similar to $c_2$ than it is to $c_3$.
In this paper, the function~$\colorsim$ is always a Euclidean distance, but we emphasize that our
techniques apply also to other settings. For example, one may imagine applications in which
POIs are partitioned into categorical \emph{types}, and it is desirable that some POIs of each type
are inspected. In this case, one could define $\colorsim$ as an indicator function which returns $0$ or $1$ according
to whether or not the input colors are of the same type. The core idea behind our methods is to select a small set of
colors having \emph{maximum dispersal}, meaning that as much as possible, every color in $\colorset$ should be highly similar
(according to the function $f$) to at least one representative color in $\colorset'$.\looseness-1

\begin{figure}[t]
  \vspace*{-1.1em}
  \input{figures/greedyMDpseudocode.tex}
  \vspace*{-2.4em}
\end{figure}

We evaluate four algorithms for this task. The baseline (which we call \randmd) selects colors uniformly at random.
This is the strategy employed by \iriscli{} when needed~\cite{fu2021computationally}.
The second (called \greedymd---MD for Maximum Dispersal) is a greedy strategy based on the Gonzalez algorithm for $k$-center~\cite{gonzalez1985clustering};
this algorithm is described in more detail in Algorithm~\ref{alg:greedyMD},
where we set $\col_0=\col(s)$.
The final two algorithms (\metricmd{}, \outliermd{}) are modified versions of this strategy.
The interested reader is referred to~\Cref{appendix:color-reduction} for a detailed description
and the results of our comparative study.
We note that all of our algorithms outperform the baseline \randmd{}
in terms of the resulting coverage.
We perform our final comparisons (see~\Cref{sec:compare-to-iris}) using \greedymd{} for color reduction.

\subsection{Merging Walks}\label{sec:walk-merging}

When using \dpipa{} or \ilpipa{} on a color-reduced graph,
the computed walk is minimum weight for the reduced color set,
but the corresponding walk in the original graph may not collect many additional colors.
To increase the coverage in the original graph, we merge two or more walks into a single closed walk.
Now, the challenge is how to keep the combined walk short.
Suppose we have a collection of $W$ solution walks $\{P_i\}$ and want a combined walk $P$ that visits all vertices in $\bigcup_i V(P_i)$.
We model this task as the following problem\footnote{
  An underlying simple graph of a multigraph is obtained by deleting loops and replacing multiedges with single edges.
}.

\begin{problembox}{Minimum Spanning Eulerian Subgraph}
  \Input & A loopless connected Eulerian multigraph $G$
    and edge weights $w: E \to \R_{\geq 0}$, where $E$ denotes the edges in
    $G$'s underlying simple graph.\\
  \Prob & Find a spanning subgraph $G'$ of $G$ such that $G'$ is a connected Eulerian multigraph
  minimizing the weight sum, \textit{i.e.} $\sum_{e \in E(G')}w(e)$.\\
\end{problembox}

We showed in~\Cref{sec:ilp-theory} (see Observation~\ref{lem:eulerian}) that each solution walk for \gi{} is an Eulerian tour in
a multigraph.
We hence use these two characterizations interchangeably.
Unfortunately, this problem is \cclass{NP}-hard
as we can see by reducing from \prob{Hamiltonian Cycle},
which asks to find a cycle visiting all vertices in a graph.
Given an instance $G$ with~$n$~vertices of \prob{Hamiltonian Cycle},
we duplicate all the edges in $G$ so that the graph becomes Eulerian.
If we set a unit weight function $w$ for $E(G)$, i.e. $w(e)=1$ for every~$e \in E(G)$,
then $G$ has a Hamiltonian cycle if and only if $(G,w)$ has a spanning subgraph of weight $n$.

In this paper, we propose and evaluate three simple heuristics for \prob{Minimum Spanning Eulerian Subgraph}: \concatmerge, \greedymerge, and \exactmerge.
\concatmerge simply concatenates all walks.
Since all walks start and end at vertex $s$, their concatenation is also a closed walk.
In \Cref{appendix:B}, we give an algorithm which uses \concatmerge and solves \gi{} optimally. We implemented a simplified version which is better by a factor of~$n$ in both running time and memory usage.
\greedymerge, detailed in \Cref{sec:merge-proof}, is a polynomial-time heuristic including simple preprocessing steps for \prob{Minimum Spanning Eulerian Subgraph}.
At a high level, \greedymerge builds a minimum spanning tree and removes as many redundant cycles as possible from the rest.
\exactmerge is an exact algorithm using the ILP formulation for \gi{}.

\subsubsection*{Algorithm \exactmerge.}
We construct an instance of \gi{} by taking the underlying simple graph of the instance $G$
of \prob{Minimum Spanning Eulerian Subgraph}.
We pick an arbitrary vertex $s \in V(G)$ as the starting vertex, and set unique colors to the other vertices.
After formulating the ILP for \gi{} with~$t=n-1$ (collecting all colors) as in \Cref{sec:ilp-theory},
we add the following constraints:
$x_{u,v} + x_{v,u} \leq 1$ for every edge $uv \in E(G)$ with multiplicity $1$.
Lastly, we map a solution for the ILP to the corresponding multigraph.
This multigraph should be spanning as we collect all colors in \gi{},
and it is by definition Eulerian. If 
each optimal solution contains at most $2n-2$ edges
(which we have already shown; see~\Cref{lem:num-edges-bound}) and $W$ is constant, then our ILP formulation has~$\Oh(n)$ variables.
Thus (unlike the original instance of \gi)
we can often quickly solve the walk merging problem exactly.\looseness-1

\begin{figure}[t]
  \begin{subfigure}{\textwidth}
    \includegraphics*[width=\linewidth]{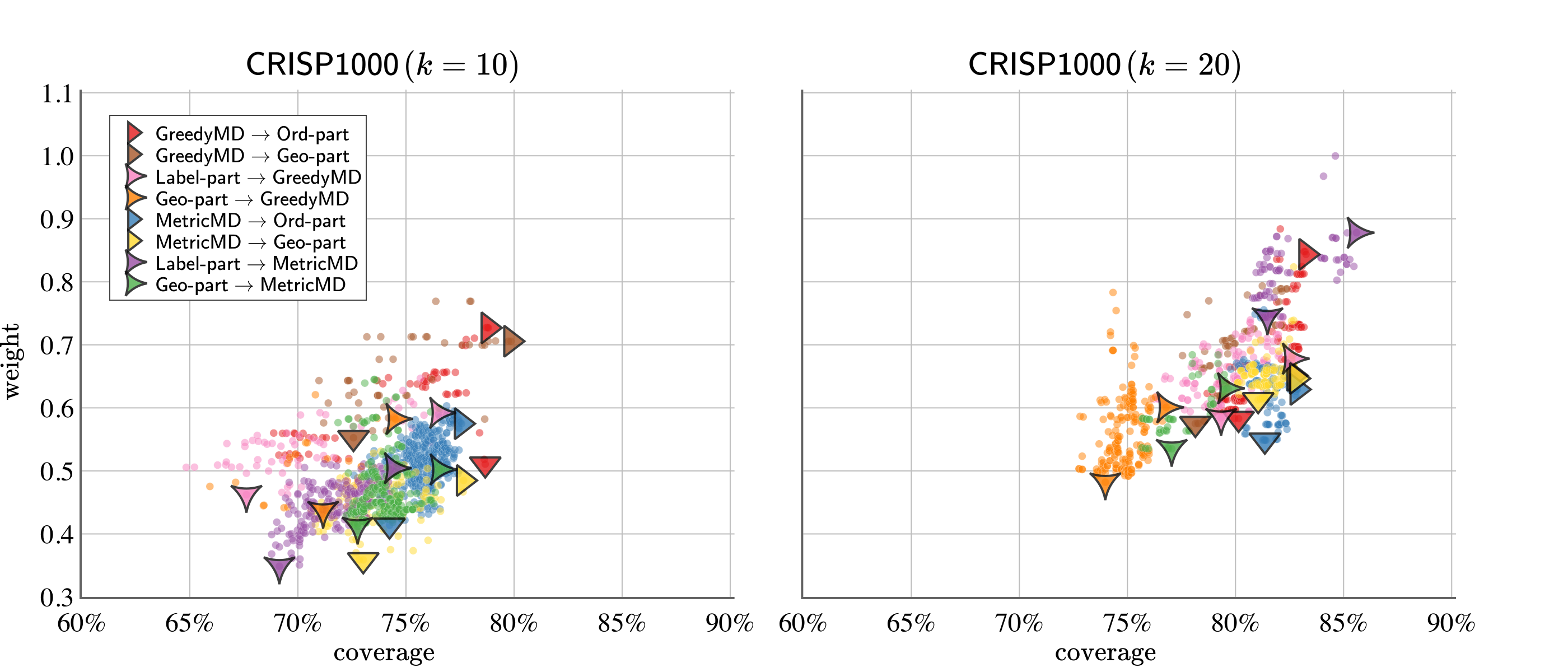}
  \end{subfigure}
  \begin{subfigure}{\textwidth}
    \includegraphics*[width=\linewidth]{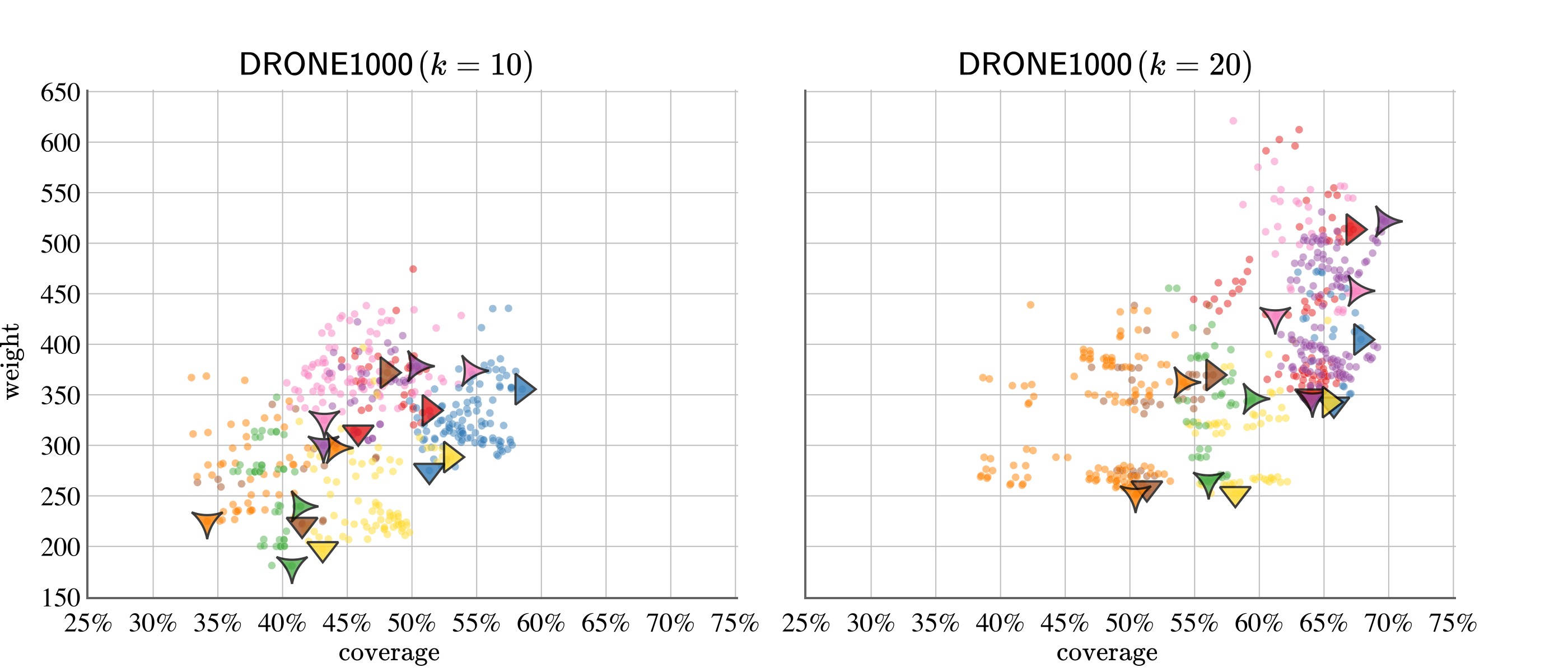}
  \end{subfigure}
  \caption{\label{fig:color-part-main-body} Selected results of color partitioning experiment on datasets \CRISPsmall{} and \DRONEsmall{} with $k \in \{10, 20\}$. Each data point represents a solution computed using \dpipa{} and \exactmerge{}. For each combination of color reduction method and partitioning strategy, we highlight the solutions with maximum coverage (rightward arrow) and with minimum weight (downward arrow).}
  \vspace*{-1em}
\end{figure}

\subsection{Partitioning Colors}\label{sec:partitioning-colors}

Because we want to combine multiple ($W > 1$) walks to form our solution,
it is useful to first partition the colors. This way, each independently
computed walk collects (at least) some disjoint subset of colors. We propose two
algorithms. The first (which we call~\orderedpart, short for \emph{ordered} partitioning) takes an ordered color set $\colorset$
as input and partitions it sequentially, i.e., by selecting the first~$|\colorset|/W$~colors as one
subset, the second~$|\colorset|/W$~colors as another, and so on.
The second (called~\geometricpart, short for \emph{geometric partitioning}) executes \greedymd{} with parameter $W$,
and then partitions $\colorset$ by assigning each color to the most similar (according to the function $f$) of the $W$ selected ``representatives''.\looseness-1

We also tested whether to perform color partitioning \emph{before} or \emph{after} color reduction. In the former
case, the full color set $\colorset$ is partitioned by one of the algorithms described above\footnote{In~\Cref{fig:color-part-main-body}, \orderedpart{} is referred to as \textsf{Label-part} when it is performed before color reduction, to emphasize that in this case the partitioning is based on the (potentially not random) sequence of POI labels given as input.}, and then color reduction is
performed on each subset. In the latter, color reduction is performed to obtain $W\cdot k$ colors, and then these colors are
partitioned into $W$ sets of size $k$ using one of the algorithms described above.
In~\Cref{fig:color-part-main-body} we display the results of our partitioning experiments on
the instances used in \cite{fu2021computationally},
one for a surgical inspection task (\CRISPsmall{})
and another for a bridge inspection task (\DRONEsmall{});
results for extended datasets are deferred to ~\Cref{fig:color-part-appendix}.
We now draw attention to two trends. First, we note that while using \orderedpart{} before color reduction performs well in terms of coverage, particularly for
the larger $k$ values, we believe that this result is confounded somewhat by non-random ordering of the POIs in the input data. That is, we conjecture that the POIs arrive in an order which
conveys some geometric information.
Second, we note that while \metricmd{} seems to outperform \greedymd{} (in terms of coverage) as a color reduction strategy for
\DRONEsmall{} with $k = 10$, this effect is lessened when $k = 20$. We believe that this trend is explainable, as for small $k$ values the greedy procedure may select \emph{only} peripheral POIs,
while a larger $k$ enables good representation of the entire space, including a potentially POI-dense ``core'' of the surface to be inspected. Given the
complexity of the comparative results presented in~\Cref{fig:color-part-main-body}, we
favor the simplest, most generalizable, and most explainable strategy.
For this reason, the experiments of~\Cref{sec:compare-to-iris} are performed using \greedymd{} to reduce colors before partitioning using \orderedpart{}.\looseness-1

\tikzstyle{data_}=[line width=0.5mm,draw=darkgray]
\tikzstyle{task_}=[rounded corners,line width=0.5mm,draw=darkgray]
\tikzstyle{subtask_}=[rounded corners,line width=0.3mm,draw=darkgray]
\tikzstyle{alg_}=[rounded corners=2.5mm,line width=0.3mm,draw=darkgray]
\tikzstyle{process_}=[-{Latex[length=3mm]},line width=0.5mm,draw=darkgray]
\tikzstyle{subprocess_}=[-{Latex[length=2mm]},line width=0.3mm,draw=darkgray]
\tikzstyle{tasktitle_}=[align=left,anchor=west]

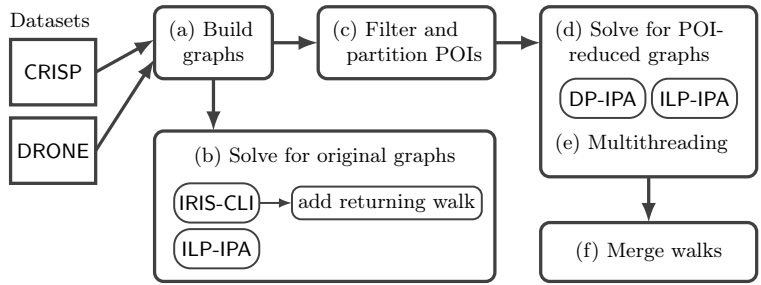
\begin{figure}[t]
  \centering

  \begin{adjustbox}{max width=0.87\textwidth}
  \begin{minipage}[m]{\textwidth}
      \vspace{0pt}
      \centering
      \begin{tikzpicture}
          \draw[data_] (0.5, 2.7) rectangle ++ (1.3, 1.0) node[midway] {\DRONE};
          \draw[data_] (0.5, 3.9) rectangle ++ (1.3, 1.0) node[midway] {\CRISP};
          \node at (1.1, 5.2) {Datasets};

          \draw[process_] (1.8, 3.2) -- (2.7, 4.6);
          \draw[process_] (1.8, 4.4) -- (2.7, 4.9);

          \draw[task_] (2.7, 4.3) rectangle ++ (1.8, 1.1);
          \node[tasktitle_] at (2.8, 4.85) {(a) Build \\\ \ graphs};

          \draw[process_] (3.6, 4.3) -- (3.6, 3.5);

          \draw[task_] (2.7, 1.2) rectangle ++ (5.2, 2.3);
          \node at (5.3, 3.1) {(b) Solve for original graphs};
          \draw[alg_] (3.0, 2.1) rectangle ++ (1.3, 0.6) node[midway] {\iriscli};
          \draw[alg_] (3.0, 1.4) rectangle ++ (1.3, 0.6) node[midway] {\ilpipa};

          \draw[subprocess_] (4.3, 2.4) -- (4.8, 2.4);
          \draw[subtask_] (4.8, 2.15) rectangle ++ (2.9, 0.5) node[midway] {add returning walk};

          \draw[process_] (4.5, 4.85) -- (5.2, 4.85);

          \draw[task_] (5.2, 4.3) rectangle ++ (2.7, 1.1);
          \node[tasktitle_] at (5.3, 4.85) {(c) Filter and \\\ \ partition POIs};
          
          \draw[process_] (7.9, 4.85) -- (8.6, 4.85);
          \draw[task_] (8.6, 2.8) rectangle ++ (3.3, 2.6);
          \node[tasktitle_] at (8.7, 4.85) {(d) Solve for POI-\\\ \ reduced graphs};
          \node[tasktitle_] at (8.7, 3.3) {(e) Multithreading};
          \draw[alg_] (8.9, 3.7) rectangle ++ (1.3, 0.6) node[midway] {\dpipa};
          \draw[alg_] (10.3, 3.7) rectangle ++ (1.3, 0.6) node[midway] {\ilpipa};

          \draw[task_] (8.6, 1.2) rectangle ++ (3.3, 0.9) node[midway]{(f) Merge walks};
          \draw[process_] (10.25, 2.8) -- (10.25, 2.1);
      \end{tikzpicture}
  \end{minipage}
  \end{adjustbox}
  \caption{%
  Overview of our experiment pipeline.
  }
  \label{fig:pipeline}
  \vspace*{-1em}
\end{figure}

\section{Empirical Evaluation}\label{sec:experiments}

To assess the practicality of our proposed algorithms, we ran extensive experiments
on a superset of the real-world instances used in \cite{fu2021computationally}.
\Cref{fig:pipeline} shows an overview of the experiment pipeline.
We first built RRGs using \iriscli{}, originating from the \CRISP{} and \DRONE{} datasets (\Cref{fig:pipeline}~(a)).
We tested \iriscli{} and \ilpipa{} on these instances with no additional color reduction.
As \iriscli{} iteratively outputs an $s$-$t$ walk for some vertex $t \in V(G)$,
we completed each walk with the shortest $t$-$s$ path\footnote{%
\label{fn-neg}The time taken for augmenting and merging walks was negligible.
} to ensure a fair comparison while still giving as much credit as possible to \iriscli{} (\Cref{fig:pipeline}~(b)).
For \dpipa{}, we filter and partition POIs to obtain $3$ sets of $k$ POIs,
where $k=10,20$ (\Cref{fig:pipeline}~(c)).
Then, we ran \dpipa{} to exactly solve \gi{} for POI-reduced instances.
In addition, we ran \ilpipa{} for comparing color reduction/partitioning algorithms (\Cref{fig:pipeline}~(d))
and measured speedups of those algorithms with different number of threads (\Cref{fig:pipeline}~(e)).
Lastly, we merged the walks using our algorithms to construct a ``combined'' closed walk (\Cref{fig:pipeline}~(f)).
Here we define the ``search time'' for the combined walk as the total of the search times of single-run walks
plus the time taken for merging walks\footref{fn-neg}.
Except in experiment (e), we set the time limit of each algorithm to 900 seconds (15 minutes), and 
used $80$ threads for \dpipa{} and \ilpipa{}.\looseness-1

We tested on four \gi{} instances, two of which replicate the instances used in \cite{fu2021computationally}.
The first dataset, \CRISP, is a simulation for medical inspection tasks of the Continuum Reconfigurable Incisionless Surgical Parallel (CRISP)
robot \cite{anderson2017continuum,mahoney2016reconfigurable}.
The dataset simulates a scenario segmented from a CT scan of a real patient
with a pleural effusion---a serious medical condition that can cause the collapse of a patient's lung.
The second, \DRONE{}, is an infrastructure inspection scenario,
in which a UAV with a camera is
tasked with inspecting the critical structural features of a bridge.
Its inspection points are the surface vertices in the 3D mesh model of a bridge structure used in \cite{fu2021computationally}.
To match the experiments in \cite{fu2021computationally}, we used \iriscli{} to build RRGs with $n_{\text{build}}=1000$
and, for \CRISP{}, uniformly randomly selected $4200$ POIs. We call these instances \CRISPsmall{} and \DRONEsmall{}.
Also, for each dataset, we built RRGs with $n_{\text{build}}=2000$ (denoted \CRISPbig{} and \DRONEbig{}).
\Cref{appendix:experiments} details our graph instances
and experiment environment.
Code and data to replicate all experiments are available at \cite{robotic-brewing}.\looseness-1

\begin{figure}[t]
  \begin{subfigure}{\textwidth}
    \includegraphics[width=\linewidth]{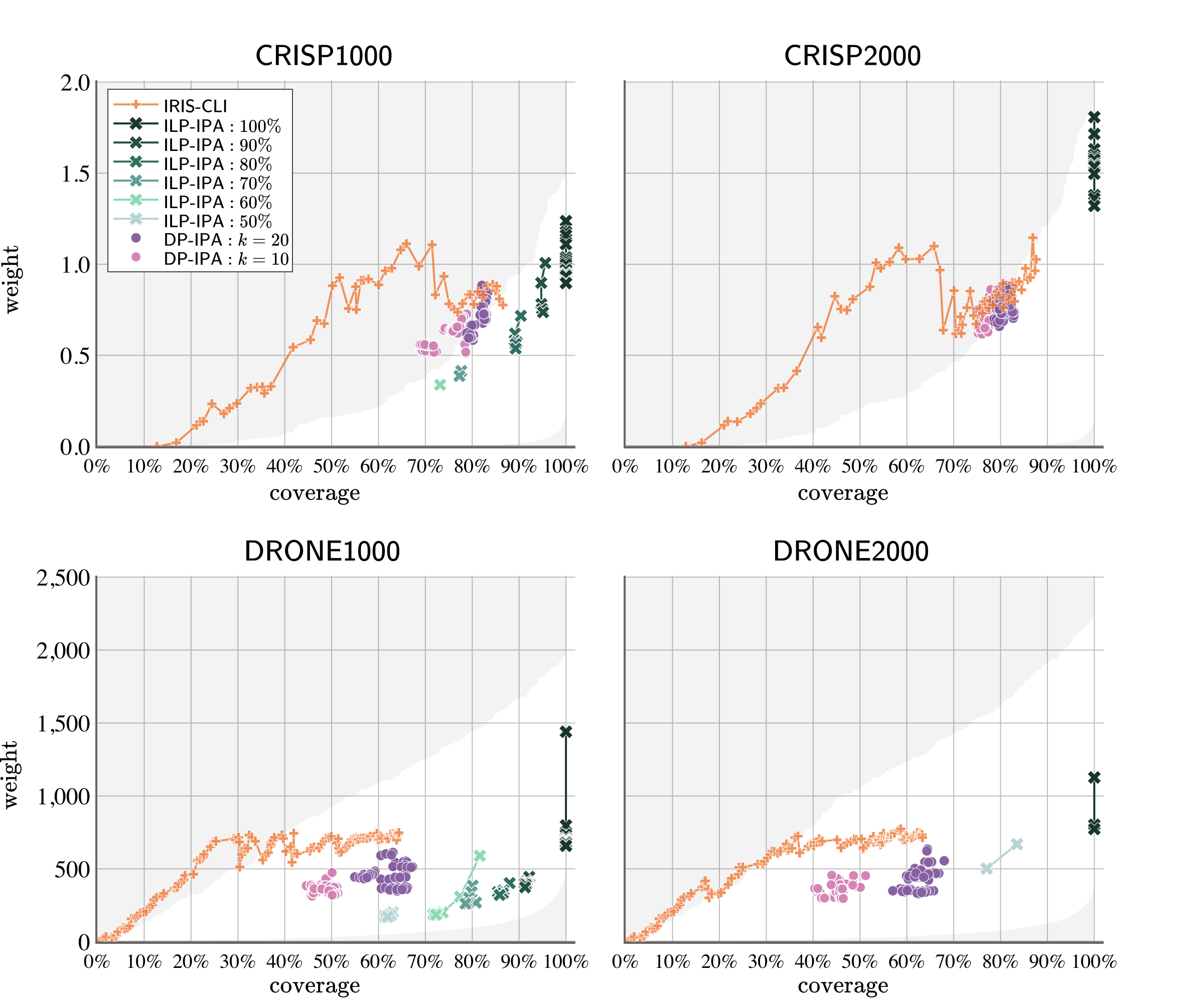}
  \end{subfigure}
  \caption{\label{fig:headline} Performance of \iriscli{}, \ilpipa{} and
  \dpipa{} (with \greedymd) on \DRONE~and \CRISP~benchmarks. Each data point represents a computed inspection plan; coverage is shown as a percentage of all POIs in the input graph. The area shaded in gray is outside the upper/lower bounds given in Section~\ref{sec:theory-bounds}.}
  \vspace*{-1em}
\end{figure}

\subsubsection*{Comparison to \iriscli{}.}\label{sec:compare-to-iris}

First we compare the overall performance of our proposed algorithms to that of \iriscli{}.
In this experiment, we ran \iriscli{} with all original instances,
\ilpipa{} with all original instances and $t=\frac{i}{10} \cdot |\colorset|$ for $5 \leq i \leq 10$,
and \dpipa{} with POI-reduced instances accompanied by walk-merging strategies,
\greedymd{} (\metricmd{} in Appendix \Cref{fig:headline2}),
\orderedpart{} \emph{after} color reduction, and \exactmerge{} with $k \in \{10,20\}$.
We additionally computed the upper and lower bounds from~\Cref{sec:theory-bounds} for all possible $k$ values.

\Cref{fig:headline} plots the coverage and weight of each solution obtained within the time limit.
\iriscli{} achieved around $87\%$ coverage on both \CRISP{} instances.
\ilpipa{} outperformed \iriscli{} on \CRISPsmall{} by providing (i) for $t=0.8\cdot |\colorset|$, slightly better coverage
paired with a $30\%$ reduction in weight, and (ii) for $t=|\colorset|$, perfect coverage with only a $16\%$ increase in weight.
On \CRISPbig{}, \ilpipa{} failed to find a solution except with $t=|\colorset|$.
Meanwhile, \dpipa{} was competitive with \iriscli{}, finding walks with moderate reductions in weight at the expense of slightly reduced coverage ($83\%$).
The differences between \iriscli{} and our algorithms are more significant on \DRONE{},
where \dpipa{} (with $k=20$) outperformed \iriscli{} by providing more coverage ($68\%$ vs. $64\%$)
while reducing weight by over $50\%$. \ilpipa{} outperformed \iriscli{} by even larger margins on \DRONEsmall{},
but did not produce many solutions within the time limit on \DRONEbig{}.\looseness-1

To summarize, \ilpipa{} is the most successful on smaller instances, and works with various values of $t$.
With larger graphs, \ilpipa{} is more likely to time out when $t < |\colorset|$ (as described in \Cref{sec:ilp-theory}, the ILP formulation in this case is more involved).
\dpipa{} is more robust on larger instances, outperforming \iriscli{} in terms of solution weight while providing similar coverage.

\begin{figure}[t]
  \begin{subfigure}{\textwidth}
    \includegraphics[width=\linewidth]{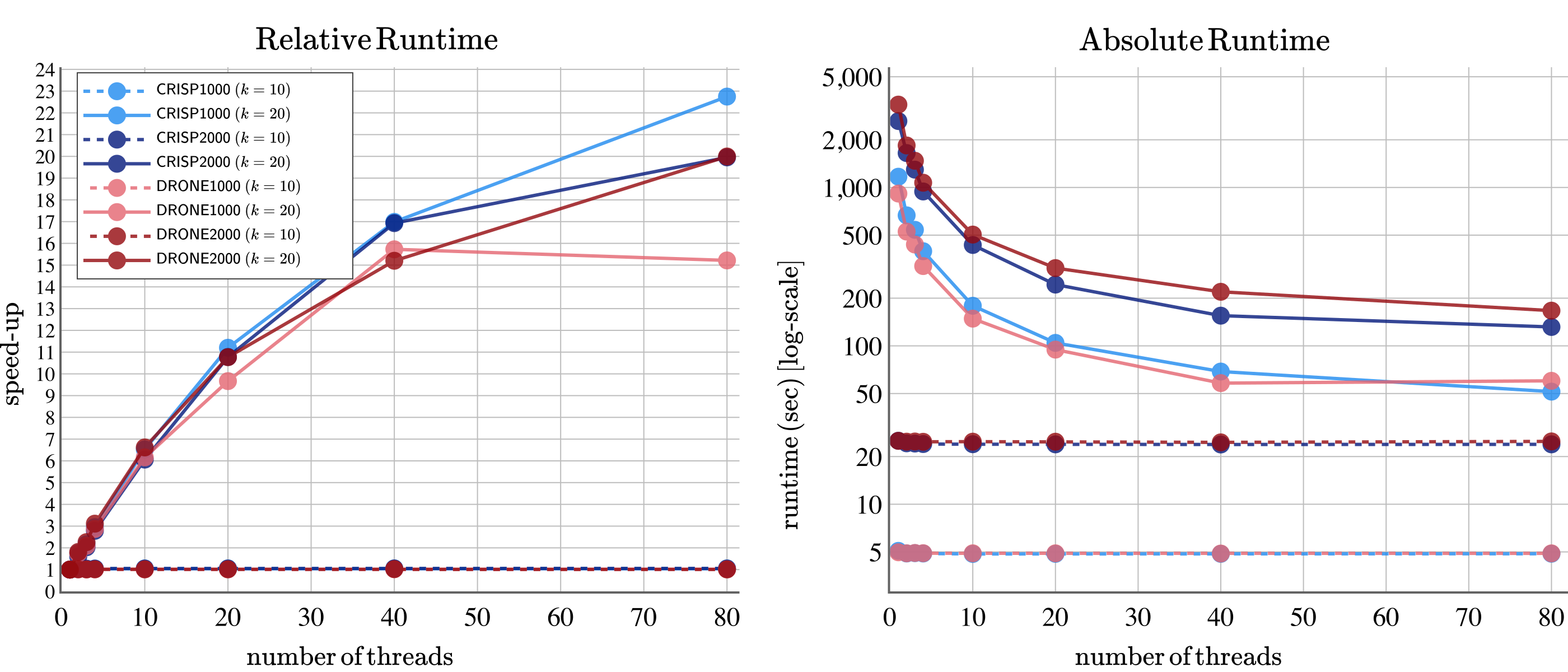}
  \end{subfigure}
  \caption{\label{fig:dp-multithreading} Relative (left) and absolute (right) runtimes for \dpipa{} with 1-80 threads.
  The relative runtime (speed-up) is with respect to a single thread.}
\end{figure}

\subsubsection*{Upper and lower bounds.}
First, we observe that the curvatures of upper bounds (recall~\Cref{sec:theory-bounds}) are quite different in \CRISP{} and \DRONE{}.
We believe the geometric distribution of POIs explains this difference;
with \CRISP, the majority of POIs are close to the POIs seen at the starting point,
which leads to concave upper-bound curves. \DRONE{}, on the other hand, exhibits a linear trend in upper bounds
because POIs are more evenly distributed in the 3D space, and the obtained solutions are far from these bounds.
On the lower bound side, we obtain little insight on \CRISP{} but see that on \DRONE{}, it allows 
us to get meaningful bounds on the ratio of our solution's weight to that of an optimal inspection plan. 
For example, the lower bounds with $t=|\colorset|$ for \DRONEsmall{} and \DRONEbig{}
are $466.74$ and $364.72$, respectively.
The best weights by \ilpipa{} are $658.35$ and $773.39$, giving approximation ratios of
$1.4$ and $2.1$ (respectively).

\subsubsection*{Multithreading Analysis.} Our implementations of \dpipa{} and \ilpipa{} both allow for multithreading, but \iriscli{} cannot be parallelized without
extensive modification (i.e., the algorithm is inherently sequential). 
\Cref{fig:dp-multithreading} illustrates order-of-magnitude runtime improvements for \dpipa{} when using multiple cores. Analogous results for \ilpipa{} are deferred to~\Cref{appendix:multithreading}.

\subsubsection*{Empirical Analysis of Walk-Merging Algorithms.}

The results reported in~\Cref{fig:headline} are all computed using \exactmerge{} for walk merging. We also evaluated the effectiveness of \concatmerge{} and \greedymerge{} in terms of both runtime and resulting (merged) walk weight. We observed that while \greedymerge{} is a heuristic, it produces walks of nearly optimal weight with negligible runtime increase as compared to \concatmerge{}. Meanwhile, the runtime of \exactmerge{} was always within a factor of two of \concatmerge{}; given the small absolute runtimes (<0.1 seconds in all cases), we chose to proceed with \exactmerge{} to minimize the weight of the merged walk. Complete experimental results are shown in~\Cref{appendix:experiments}.

\subsubsection*{Comparing \ilpipa{} and \dpipa{}.}\label{sec:compare-ilp-dp}
  In this work we have contributed two new \gi{} solvers, namely \dpipa{} and \ilpipa{}. We conclude this section with a
  brief discussion of their comparative strengths and weaknesses. As discussed previously, for small
  graphs (e.g., \CRISPsmall{} and \DRONEsmall{}) \ilpipa{} is clearly the best option, as it provides higher coverage with
  less weight. However, \dpipa{} performs better when the graph is large. In particular, if in some application minimizing weight is more important
  than achieving perfect coverage, then \dpipa{} is preferable in large graphs. One might ask whether this trade-off (between weight and coverage) can also be tuned
  for \ilpipa{} by setting $t < |\colorset|$, but we emphasize that in practice this choice significantly
  increases the runtime of \ilpipa{}, such that it is impractical on large graphs.
  This is clear from the data presented in~\Cref{fig:headline},
  and is also detailed in~\Cref{appendix:additional}.
  We observe that \ilpipa{} becomes less competitive with \dpipa{} as $n$ and $k$ grow.

\section{Conclusion}\label{sec:conclusion}
In this work, we took tangible and meaningful steps toward mapping the \gi{} planning problem in robotics to established problems (e.g. \prob{Generalized TSP}).
We presented two algorithms, \dpipa{} and \ilpipa{}, to solve the problem under this abstraction, based on dynamic programming and integer linear programming.
We presented multiple strategies for leveraging these algorithms on relevant robotics examples lending insight into the choices that can be made to use these methods in emerging problems.
We then evaluated these methods and strategies on two complex robotics applications, outperforming the state of the art.\looseness-1

Our approach of creating several reduced color sets and merging walks offers a new paradigm for leveraging algorithms whose complexity has high dependence on the number of POIs, and opens the door for future exploration.
We plan to see how these methods perform and scale with more than three walks.
Further, it remains to implement these algorithms on real-world, physical robots and inspection tasks.\looseness-1

\section*{Acknowledgements}
This work was supported in part by the European Research Council (ERC) under the European Union's Horizon 2020
research and innovation programme (grant agreement No. 819416), by
the Gordon \& Betty Moore Foundation's Data Driven Discovery Initiative
(award GBMF4560 to Blair D. Sullivan), and by the National Science Foundation (award ECCS-2323096 to Alan Kuntz).

\bibliographystyle{splncs04}
\bibliography{refs,Kuntz-Refs}

\newpage
\appendix
\section{Walk Merging: Optimality in the Limit}
\label{appendix:B}
We argue that our strategy for merging walks is a simplification of an algorithm that is optimal in the limit, given sufficient runtime.
Note that the dynamic program behind \Cref{thm:DP} is optimal (always produces a walk of minimum weight that collects the given colors).
A very simple strategy that is also optimal is to select an arbitrary permutation~$(c_1,c_2,\ldots,c_k)$ of the colors and then compute a shortest walk that collects all colors in the respective order. If we repeat this for each possible permutation, then at some point, we will find an optimal solution.

We now observe that computing a walk that collects all colors in the guessed order can be computed in polynomial time by the following dynamic program~$T$ that stores for each vertex~$v$ and each integer~$i \in [k]$ the length of a shortest walk between~$s$ and~$v$ that collects the first~$i$ colors in the guessed order. Therein, we use a second table~$D$.
\begin{align*}
	D[v,i] &= \begin{cases} T[u,i-1] + \dist(u,v) & \text{if } c_i \in \col(v) \\ \infty & \text{else} \end{cases}\\
	T[v,i] &= \min_{u \in V} D[u,i] + \dist(u,v)
\end{align*}
We mention that we assume that~$\dist(v,v)=0$ for each vertex~$v$.

We now modify the above strategy to achieve a better success probability than when permutations are chosen randomly.
Instead of guessing the entire sequence of colors, we guess \emph{buckets} of colors, that is, a sequence of~$c$ sets for some integer~$c$ that form a partition of the set of colors into sets of size~$\nicefrac{k}{c}$ (appropriately rounded, for this presentation, we will assume that~$k$ is a multiple of~$c$).
Note that there are~$\nicefrac{k!}{((\nicefrac{k}{c})!)^c}$ possible guesses for such buckets.
For each bucket, let~$S_i$ be the set of colors in the bucket.
We can now compute for each pair~$u,v$ of vertices a shortest walk between~$u$ and~$v$ that collects all colors in~$S_i$ by running the dynamic program behind \Cref{thm:DP} for color set~$S_i$ from all vertices~$u \in V$.
Let this computed value be~$S[u,v,S_i]$.
Given a guess for a sequence of buckets, we can now compute an optimal solution corresponding to this guess by modifying the above dynamic program as follows.
\begin{align*}
	T'[v,1] &= S[s,v,S_1]\\
	T'[v,i] &= \min_{u \in V} T[u,i-1] + S[u,v,S_i] \text{ if } i > 1
\end{align*}
Note that~$T'[v,i] \leq T[v,ci]$ if the sequence used to compute~$T$ corresponds to the set of buckets used to compute~$T'$.
This algorithm again achieves optimality in the limit (that is, given enough runtime, it will find a minimum weight walk).

In order to avoid computing~$S$ for all pairs of vertices, we decided to implement a simplification where we only compute~$S'[S_i] = S[s,s,S_i]$.
This version is not guaranteed to find an optimal solution like the full DP above, as there are examples where no optimal solution returns back to~$s$ before the very end. As an example, consider a star graph where~$s$ is a leaf and each other leaf has a unique color.
However, this simplification is faster by a factor of~$n$ and uses a factor of~$n$ less memory while performing well in practice.

\section{Color Reduction}
\label{appendix:color-reduction}

In this section, we describe our color reduction strategies in more detail and provide
empirical data comparing their effectiveness.

\subsubsection{Initializing \greedymd{}.} The Gonzalez~\cite{gonzalez1985clustering} algorithm on which
\greedymd{} is based requires that the set $\colorset'$ be initialized with at least one color. The simplest
strategy is to simply add a uniformly randomly selected color to $\colorset'$ and then proceed with the algorithm.
In practice, we found it more effective to set~$\colorset' = \col(s) \neq \emptyset$ (see Algorithm~\ref{alg:greedyMD}).
That is, we begin by adding some color which is visible from the source vertex $s$. We then greedily add
$k$ more colors. At the end of the algorithm, we return $\colorset' \setminus \col(s)$. Intuitively, the
justification for this approach is that we collect the colors $\col(s)$ ``for free'' since every solution walk
begins at $s$. Consequently, we do not need to ensure that $\colorset'$ is representative of these colors, or of
colors which are very similar to them. Empirically, we found that this initialization strategy significantly
improved the coverage of the resulting solutions. 

\subsubsection{Introducing \metricmd{} and \outliermd{}.} A potential shortcoming of
\greedymd{} is that it favors outlier colors. That is, because at each iteration it chooses the color which
is most dissimilar to the previously selected colors, we can be sure that outlier colors which are very dissimilar to
every other color will be selected. This may be undesirable for two reasons. First, if the similarity function $f$ is correlated
to colors being visible from the same vertices, then discarding outliers from $\colorset'$ may improve coverage (as computed on $\colorset$).
Second, if the similarity function $f(c_1, c_2)$ is correlated to the shortest distance between vertices labeled with $c_1$ and $c_2$, then
discarding outliers from $\colorset'$ may reduce the weight needed for a walk collecting all colors in $\colorset'$.

We designed and tested two strategies to mitigate these effects. The first, \outliermd{} (see Algorithm~\ref{alg:outlierMD}), uses an additional
scaling parameter $r \geq 1$. \greedymd{} is used to form a representative color set $\colorset'$ of size $rk$.
Next, $\colorset'$ is partitioned into $rk$ clusters by assigning each color $c \in \colorset$ to a cluster
uniquely associated with the representative $c' \in \colorset'$ to which $c$ is most similar. Finally, we return the
$k$ colors in $\colorset'$ associated with the largest clusters.

\begin{figure}[t]
  \input{figures/outlierMDpseudocode.tex}
\end{figure}

The second strategy, \metricmd{}, assumes that our colors are embeddable in a metric space. This is true, for example, when
colors represent positions in $\R^3$ on some surface mesh of the object to be inspected. In this case, we
begin by using \greedymd{} to find a representative colors set $\colorset'$ of size $k$. Next, we
perform $k$-means clustering on $\colorset$, using $\colorset'$ as the initial centroids.
The resulting centroids are positions in space, and may not perfectly match the positions of any colors. To deal with this,
we simply choose the colors closest to the centroids, and return these as our representative color set.
See Algorithm~\ref{alg:metricMD}.

\begin{figure}[t]
  \input{figures/metricMDpseudocode.tex}
\end{figure}

\subsubsection{Empirical Evaluation of Color Reduction Schemes.}

\begin{figure}
  \centering
  \begin{subfigure}{0.9\textwidth}
    \includegraphics[width=\linewidth]{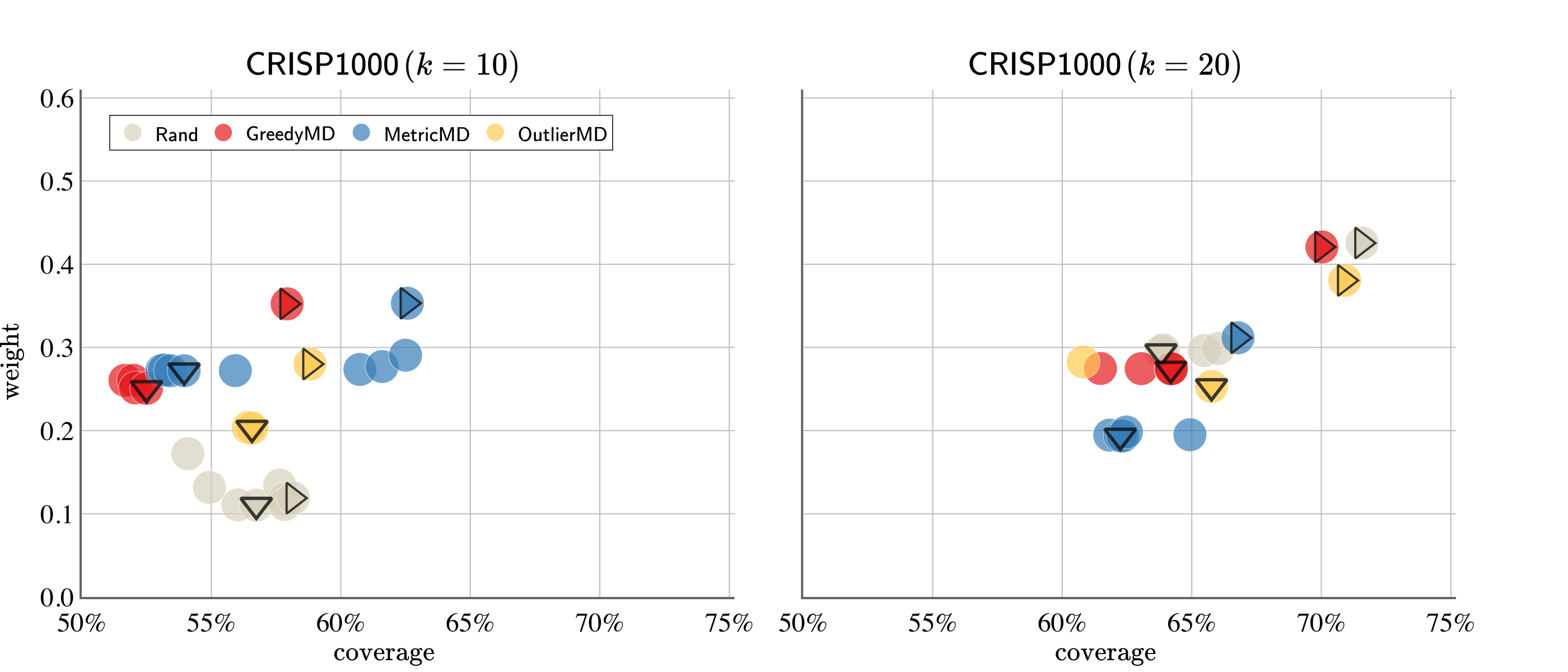}
  \end{subfigure}
  \begin{subfigure}{0.9\textwidth}
    \includegraphics[width=\linewidth]{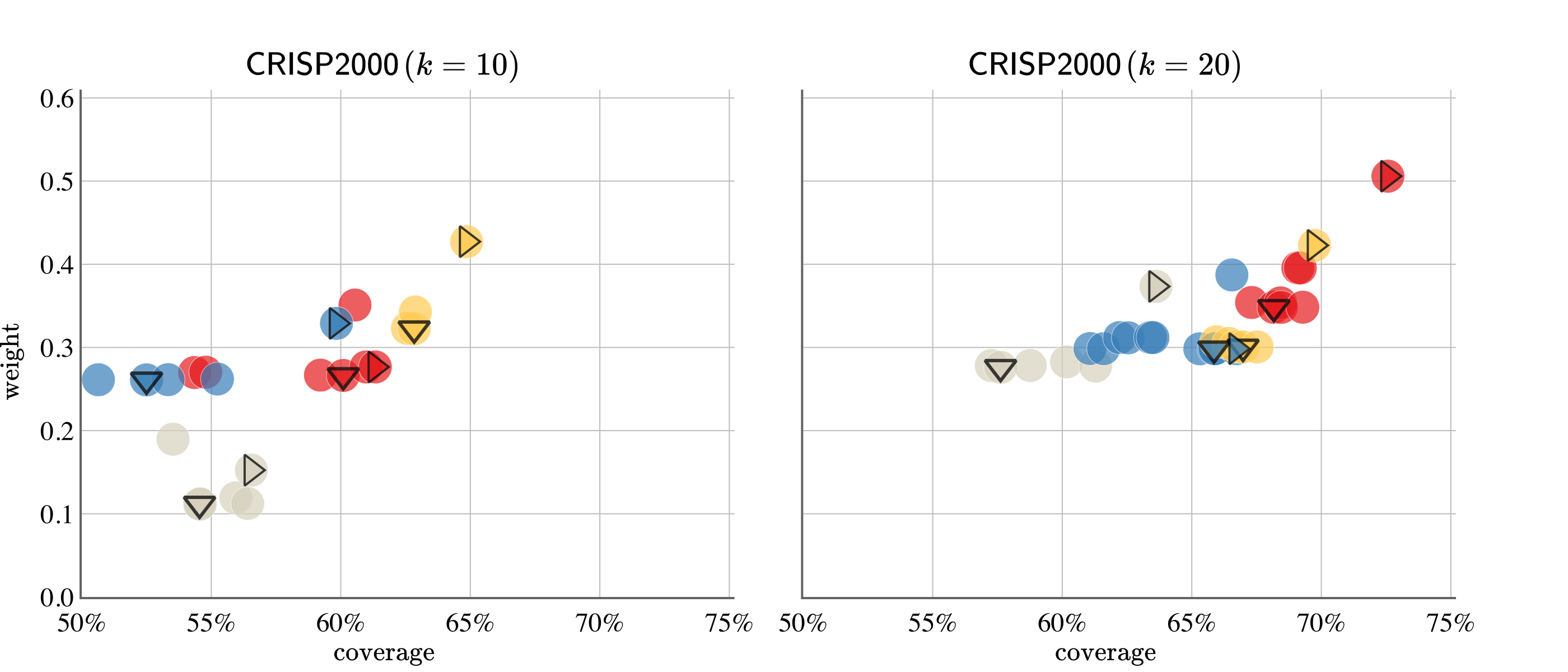}
  \end{subfigure}
  \begin{subfigure}{0.9\textwidth}
    \includegraphics[width=\linewidth]{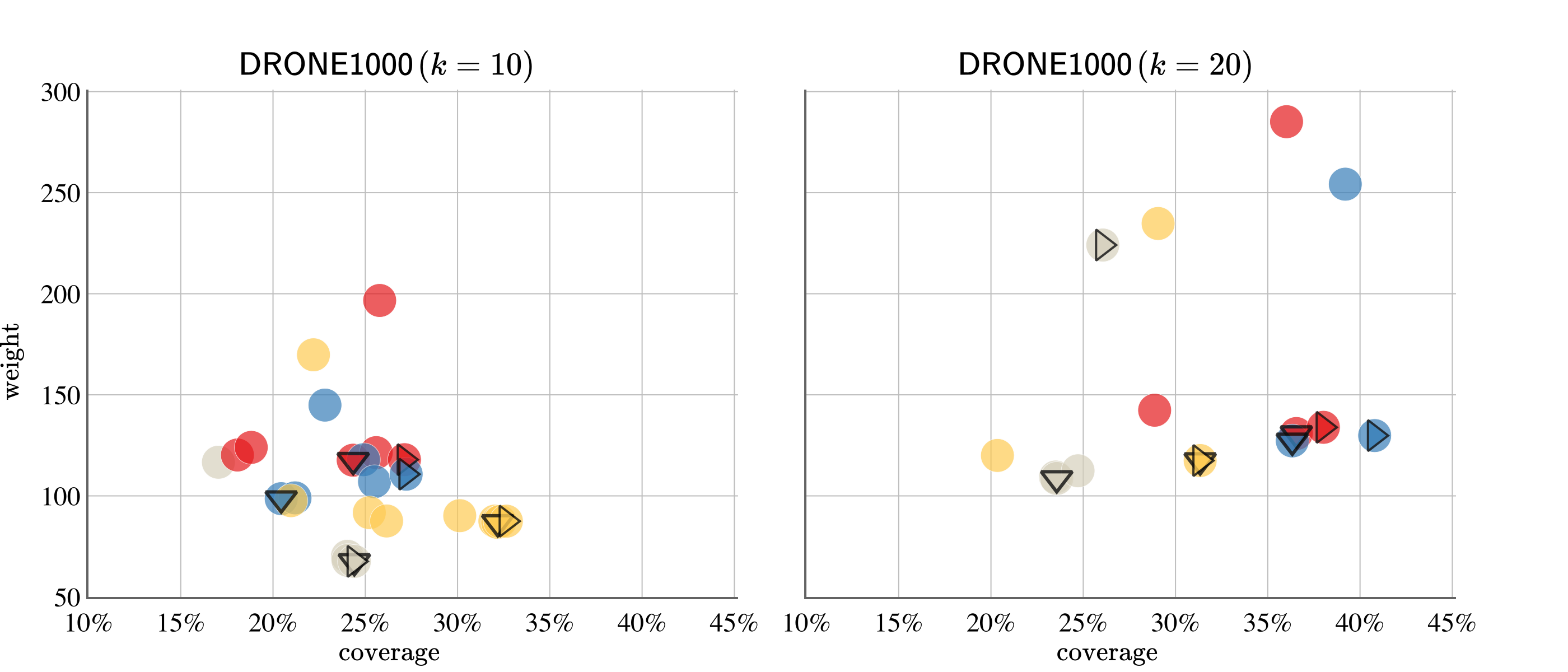}
  \end{subfigure}
  \begin{subfigure}{0.9\textwidth}
    \includegraphics[width=\linewidth]{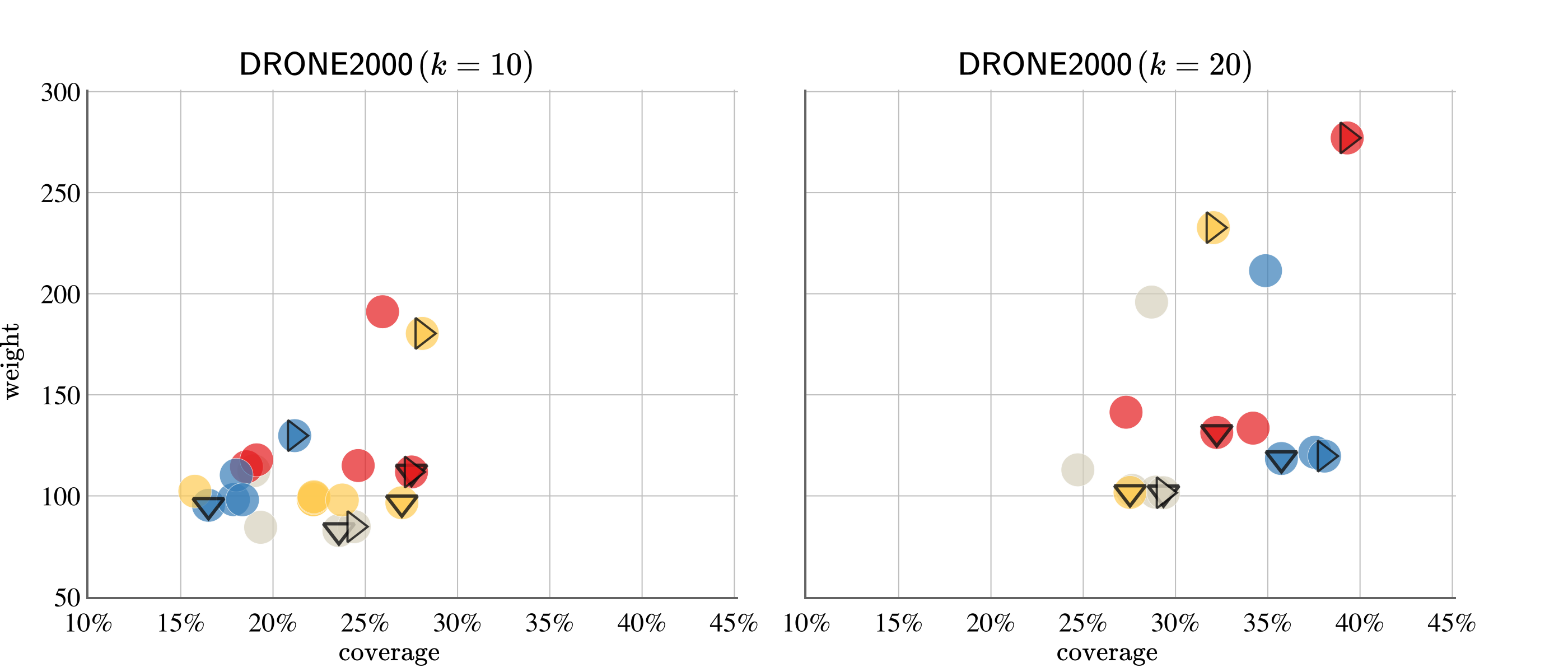}
  \end{subfigure}
  \caption{\label{fig:color-reduction-data} Results of color reduction experiments for \randmd{}, \greedymd{}, \outliermd{}, and \metricmd{}.
  Each datapoint represents a \gi{} solution generated by \dpipa{}. For each algorithm, the solution with the highest coverage (computed in the original, non-color-reduced graph) is marked with a rightward-pointing arrow, and the minimum weight solution is marked with a downward-pointing arrow.\looseness-1}
\end{figure}

We experimentally evaluated our color reduction schemes (\greedymd{}, \outliermd{}, and \metricmd{}) along with the baseline \randmd{}
on each of our four datasets, with $k \in \{10, 20\}$. To perform the evaluation, each algorithm was used to create a representative set
of $k$ colors, and then \dpipa{} was used to solve \gi{} on the color-reduced graphs. We chose \dpipa{} rather than \ilpipa{} for comparison because
the former is the solver which needs color reduction to compute walks on our datasets (i.e., \ilpipa{} can run on the
\DRONE{} and \CRISP{} datasets without any color reduction). 
The results of these experiments are displayed in~\Cref{fig:color-reduction-data}. In each plot
displayed in~\Cref{fig:color-reduction-data}, every colored dot represents a
\gi{} solution computed by \dpipa{} after color reduction performed by either
\randmd{}, \greedymd{}, \outliermd{}, or \metricmd{}. For each color reduction strategy, the
solution with the highest coverage (in the original, non-color-reduced graph) is indicated with a
rightward-pointing arrow, and the solution with minimum weight is indicated with a downward-facing arrow.

Because our color reduction schemes are designed to ensure good coverage, we are primarily interested in comparing the
coverage of solutions. The results indicate that, in general, our strategies outperform the baseline \randmd{} in terms of coverage, often by
large margins. In terms of coverage, the comparative performances of \greedymd{}, \metricmd{}, and \outliermd{} are somewhat difficult to disentangle.
Given that \greedymd{} is the simplest of the three, the most explainable, and the most generalizable (it does not require a metric embedding or
any parameter tuning), we favor it for future experiments. However, we leave as an interesting open direction to perform a more extensive
comparison of these methods on a larger data corpus.

We conclude this section with a discussion of the weight of solutions. We are not surprised that our strategies tend to produce higher-weight solutions
than \randmd{} since we are optimizing for coverage even if it means requiring that a solution walk visit outlier POIs. In particular, it is expected that
\greedymd{} performs poorly with respect to solution weight, since it explicitly favors outlier POIs. In applications
where the weight of the solution is of paramount interest, even at the expense of lowering coverage, the aforementioned
extended comparison of color reduction strategies may be of particular interest.

\section{Walk-Merging Algorithms}\label{sec:merge-proof}

In this section, we present preprocessing steps for the \prob{Minimum Spanning Eulerian Subgraph}
and the \greedymerge algorithm in detail.

\subsubsection*{Preprocessing for \prob{Minimum Spanning Eulerian Subgraph}.}
We say an edge set $\tilde{E} \subseteq E(G)$ is \emph{undeletable}
if there is an optimal solution for \prob{Minimum Spanning Eulerian Subgraph}
including all the edges in $\tilde{E}$.
We apply the following rules as preprocessing.

\begin{itemize}
  \item[Rule 1] If there is an edge with multiplicity at least $4$,
  then decrease its multiplicity by $2$.
  This makes the multiplicity of every edge either $1$, $2$ or $3$.
  \item[Rule 2] If there is an edge cut $e_1, e_2 \in E(G)$ of size $2$,
  then mark $e_1$ and $e_2$ as undeletable.
  For example, this includes (but is not limited to) the following:
  \begin{itemize}
    \item edges incident to a vertex with degree $2$.
    \item an edge with multiplicity $2$ that forms a bridge in the underlying simple graph of $G$.
  \end{itemize}
\end{itemize}

\subsubsection*{Algorithm \greedymerge.}
Consider the following heuristic for \prob{Minimum Spanning Eulerian Subgraph}.

\br

    \begin{algorithm}[H]
    \BlankLine
    Apply Rule 1 exhaustively. \;
    Construct a minimum spanning tree~$S$ of $G$ using a known algorithm. \;
    Greedily find a maximal cycle packing $\mathcal{C}$ in $G-S$ as follows:\;
	\begin{enumerate}
    	\item Let $U$ be the empty multigraph with $V(G)$
    	\item Iteratively add every edge $e \in E(G-S)$ to $U$ in order of nonincreasing weights.
    		If~$e$ creates a cycle $C$ in $U$, add $C$ to $\mathcal{C}$ and remove $C$ from $U$.\;
  	\end{enumerate}
  	\Return $G - \bigcup_{C \in \mathcal{C}}E(C)$
    \caption{\greedymerge}
    \end{algorithm}

This algorithm runs in $\Oh(m \log n)$ time.
In practice, we apply (part of) Rule~2 to find undeletable edges and include them in $S$ at step 2.
To prove the correctness of data reduction rules and algortihms,
we start by a simple, well-known observation on Eulerian graphs.

\begin{observation}\label{obs:eulerian-cycle}
  Let $C$ be a closed walk in an Eulerian multigraph $G$.
  If $G-C$ is connected, then $G-C$ is also Eulerian.
\end{observation}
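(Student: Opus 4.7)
The plan is to prove this observation by combining the classical characterization of Eulerian multigraphs (connected and every vertex has even degree, cf.\ Euler's theorem cited earlier in the paper) with a simple parity argument about closed walks. Since connectivity of $G-C$ is assumed in the statement, the only remaining task is to verify that every vertex of $G-C$ has even degree.

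First I would recall that $G$ being Eulerian implies $\deg_G(v)$ is even for every $v \in V(G)$. Next, for each vertex $v$, I would count the edge-incidences contributed by the closed walk $C$ at $v$. Writing $C = (v_0, v_1, \ldots, v_\ell)$ with $v_0 = v_\ell$, the edges of $C$ incident to $v$ are exactly those indexed by $i$ such that $v_{i-1} = v$ or $v_i = v$. Each visit of the walk to a vertex $v \neq v_0$ contributes exactly two incidences (one entering edge, one leaving edge), and the closing condition $v_0 = v_\ell$ ensures the same is true at the starting vertex. Hence the number of edge-incidences of $C$ at every vertex is even.

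From this, removing the edges of $C$ (with multiplicity) from $G$ changes the degree of every vertex by an even number, so every vertex of $G - C$ retains even degree. Together with the hypothesis that $G - C$ is connected, Euler's theorem implies that $G - C$ is Eulerian, completing the proof.

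The only mild subtlety, and the step I would be most careful about, is the bookkeeping for edge multiplicities: since we are in a multigraph, ``$G - C$'' must be interpreted as removing edges with multiplicity equal to the number of times $C$ traverses them. Once this convention is fixed, the degree computation above is straightforward and no further combinatorial work is needed.
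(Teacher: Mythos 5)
Your proposal is correct and follows essentially the same route as the paper's proof: both argue that removing the edges of a closed walk changes every vertex degree by an even amount (two incidences per visit), so even degrees are preserved, and then invoke Euler's theorem together with the assumed connectivity. You simply spell out the per-visit incidence counting and the multigraph edge-removal convention that the paper leaves implicit.
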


\begin{proof}
  Removing a closed walk does not change the parity of the degree at each vertex.
  If all vertices in $G$ have even degrees, so do those in $G-C$, which implies
  that $G-C$ is Eulerian if it is connected.
  \qed
\end{proof}

We continue with the analysis of our two reduction rules.

\begin{lemma}
  Rule 1 is safe.
\end{lemma}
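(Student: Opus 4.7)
The plan is to show that Rule 1 preserves the optimal value of the instance. Let $e = uv$ be an edge with multiplicity $\mu(e) \geq 4$ in $G$, and let $G'$ be the multigraph obtained from $G$ by decreasing the multiplicity of $e$ by two. I would prove $\text{OPT}(G) = \text{OPT}(G')$ by showing both inequalities and deducing that any optimal solution in~$G'$ is also optimal in~$G$.

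The direction $\text{OPT}(G) \leq \text{OPT}(G')$ is immediate: since $G'$ is obtained from $G$ by deleting edges, every spanning connected Eulerian subgraph of~$G'$ is also a spanning connected Eulerian subgraph of~$G$ with the same weight.

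For the reverse direction $\text{OPT}(G') \leq \text{OPT}(G)$, I would take an optimal solution $H$ of $G$ and produce a solution $H''$ of $G'$ with $w(H'') \leq w(H)$. If $H$ already uses at most $\mu(e) - 2$ copies of $e$, then $H$ is itself a spanning subgraph of~$G'$ and we are done. Otherwise $H$ uses $k \in \{\mu(e)-1, \mu(e)\}$ copies of $e$, and in particular $k \geq 3$; let $H''$ be obtained from $H$ by removing exactly two copies of $e$. I then need to verify that $H''$ is a feasible solution for $G'$: it is spanning since the vertex set is unchanged; it is Eulerian by \Cref{obs:eulerian-cycle} (equivalently, removing two copies of a single edge decreases $\deg(u)$ and $\deg(v)$ by exactly~$2$, preserving the parity of every vertex degree); and it is connected because at least one copy of~$e$ remains, so $u$ and~$v$ are still directly adjacent and every path in $H$ that used a removed copy of~$e$ can be rerouted through a remaining copy. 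Finally, $H''$ uses $k-2 \leq \mu(e)-2$ copies of $e$, so $H'' \subseteq G'$, and $w(H'') = w(H) - 2w(e) \leq w(H) = \text{OPT}(G)$ since $w(e) \geq 0$.

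The only place requiring any care is the verification that $H''$ is both Eulerian and connected after the removal; both properties follow essentially from removing a \emph{pair} of copies (for parity) together with the bound $k \geq 3$ (so at least one copy survives to maintain the $u$--$v$ link). Together the two inequalities give $\text{OPT}(G) = \text{OPT}(G')$, so Rule~1 is safe.
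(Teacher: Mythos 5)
Your proof is correct, and it reaches the same conclusion by a slightly different and more self-contained route. The paper's proof of safety is two lines: it cites \Cref{lem:distinct-edges} to assert that \emph{some} optimal solution uses each edge with multiplicity at most $2$ --- such a solution automatically survives in the reduced graph, since an edge of multiplicity at least $4$ still has multiplicity at least $2$ after the rule fires --- and then uses \Cref{obs:eulerian-cycle} to note that deleting two parallel copies (a closed walk) keeps the \emph{instance} a connected Eulerian multigraph. You instead run a direct exchange argument on an \emph{arbitrary} optimal solution $H$: if $H$ over-uses the edge, strip two copies and check by hand that the result stays spanning, Eulerian (parity), and connected (a copy survives because $k\geq 3$). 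Your version buys independence from \Cref{lem:distinct-edges} at the cost of the explicit connectivity check, which the paper's route avoids entirely because its witness never needed modification; conversely, the paper's proof leans on machinery already established for the ILP correctness. One small omission on your side: you verify that the modified \emph{solution} is feasible but never remark that the reduced graph $G'$ itself remains a valid instance of \prob{Minimum Spanning Eulerian Subgraph} (loopless, connected, Eulerian) --- this follows from exactly the same parity-plus-surviving-copy reasoning, so it is a one-sentence fix, but it is the part of the paper's proof that \Cref{obs:eulerian-cycle} is explicitly invoked for.
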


\begin{proof}
  By \Cref{lem:distinct-edges}, there must be an optimal solution with edge multiplicity at most $2$.
  From Observation \ref{obs:eulerian-cycle}, if there is an edge with multiplicity at least $4$,
  then removing $2$ of them (which can be seen as a closed walk) results in a connected Eulerian multigraph.
  \qed
\end{proof}

\begin{lemma}
  Rule 2 is safe.
\end{lemma}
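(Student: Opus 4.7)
\medskip

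The plan is to show the stronger statement that \emph{every} feasible solution for \prob{Minimum Spanning Eulerian Subgraph} contains both $e_1$ and $e_2$. Since every optimal solution is in particular feasible, this immediately implies safety of Rule~2.

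First I would fix the bipartition of $V(G)$ induced by the cut: since $\{e_1,e_2\}$ is an edge cut of size~$2$, the graph $G - \{e_1,e_2\}$ has (at least) two connected components, so there is a partition $(A,B)$ of $V(G)$ with $A,B \neq \emptyset$ such that the only edges of $G$ with one endpoint in $A$ and the other in $B$ are $e_1$ and $e_2$. Now let $G'$ be any feasible solution, i.e., a spanning connected Eulerian sub-multigraph of $G$.

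Next I would run the two standard arguments on the cut. For connectivity: since $G'$ is spanning and connected, $G'$ must contain at least one edge with endpoints in different parts of $(A,B)$, so $G'$ uses at least one of $e_1,e_2$. For parity: since $G'$ is Eulerian, every vertex of $G'$ has even degree in $G'$, hence $\sum_{v \in A} \deg_{G'}(v)$ is even. But this sum equals $2 \cdot |E(G'[A])| + c$, where $c$ is the number of edges of $G'$ crossing the cut $(A,B)$; therefore $c$ is even. As $c \in \{0,1,2\}$ and $c \geq 1$ by connectivity, we must have $c = 2$, i.e., both $e_1$ and $e_2$ lie in $G'$.

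Combining these two observations, every feasible (and hence every optimal) solution uses both $e_1$ and $e_2$, so marking them undeletable does not discard any optimal solution. The main obstacle is essentially bookkeeping: being careful that the cut/parity argument is phrased correctly in the multigraph setting (edges of multiplicity $>1$ contribute that many times to degrees and to $c$), but since we only need to count edges of the simple cut $\{e_1,e_2\}$ against the parity of $\sum_{v\in A}\deg_{G'}(v)$, no complication arises.
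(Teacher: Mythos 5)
Your proof is correct and follows essentially the same route as the paper's: both arguments fix the bipartition induced by the $2$-edge cut and conclude that any spanning closed walk (equivalently, spanning connected Eulerian subgraph) must cross the cut a positive even number of times, forcing both $e_1$ and $e_2$ into every feasible solution. The only difference is presentational --- the paper phrases the crossing argument in terms of the walk entering and leaving $S$, while you make the parity step explicit via the degree sum over one side --- which, if anything, fills in a detail the paper leaves implicit.
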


\begin{proof}
  Let $S \subset V(G)$ be a nonempty vertex set such that $e_1, e_2$ separates $S$ from $V(G) \setminus S$.
  Then, any closed walk that visits $V(G)$ must contain at least one edge from $S$ to $V(G) \setminus S$
  and another from $V(G) \setminus S$ to $S$.
  Hence, both $e_1$ and $e_2$ must be in any solution.
  \qed
\end{proof}

We conclude this section by analyzing \greedymerge. 

\begin{theorem}
  \greedymerge correctly outputs a (possibly suboptimal) solution for \prob{Minimum Spanning Eulerian Subgraph}
  in~$\Oh(m \log n)$~time.
\end{theorem}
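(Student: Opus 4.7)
The plan is to verify two claims: (i) the subgraph returned by \greedymerge{} is always a valid solution, i.e.\ a connected spanning Eulerian submultigraph of~$G$; and (ii) the implementation can be realized in $\Oh(m \log n)$ time. Optimality is explicitly not claimed, so nothing has to be said about the weight of the result relative to the optimum.

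For correctness, I would first observe that after Rule~1 the input remains a connected Eulerian multigraph: Rule~1 decreases multiplicities in pairs, which by Observation~\ref{obs:eulerian-cycle} preserves the Eulerian property, and connectivity is preserved since any edge of multiplicity at least $4$ still has copies left after a reduction. Next, I would argue that every $C \in \mathcal{C}$ collected in step~3 is in fact a cycle of $G-S$, so its edge set forms a closed walk; this is immediate from the union-find-style construction of $U$, where an edge is identified as cycle-closing exactly when its endpoints already lie in a common component of $U$. Finally, the output is $G$ minus a disjoint union of cycles, each of which is a closed walk. Removing closed walks preserves the parity of every vertex degree, so the output has all even degrees. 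Since the spanning tree $S$ is never touched, the output is spanning and connected; together with the parity condition this gives Eulerian, as required.

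For the running time, Rule~1 can be implemented by a single linear scan of the multiedge list in $\Oh(m)$. The minimum spanning tree $S$ can be built in $\Oh(m \log n)$ with Kruskal's or Prim's algorithm. Sorting the edges of $G-S$ by nonincreasing weight takes $\Oh(m \log n)$. The cycle-packing loop processes each edge once, performing a union-find operation to test whether the endpoints are already connected in $U$; using union-by-rank with path compression this is $\Oh(m \alpha(n))$ overall, dominated by the previous steps. The final subgraph is then produced by a linear pass over the edges, marking those that appear in some cycle $C \in \mathcal{C}$ as deleted. Summing the contributions yields the claimed $\Oh(m \log n)$ bound.

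The main subtlety I expect to work through is the justification that the cycles detected in step~3 are genuinely edge-disjoint and that their removal does not disconnect the graph. Edge-disjointness follows because whenever a cycle $C$ is added to $\mathcal{C}$, all of its edges are immediately removed from $U$, so no later cycle can reuse one of them. Preservation of connectivity is precisely the reason for building the spanning tree $S$ first and removing cycles only from $G - S$: since $S$ remains intact in the output and $S$ is spanning, the output multigraph contains a spanning tree and is therefore connected. Once these two points are made precise, correctness and the time bound follow cleanly.
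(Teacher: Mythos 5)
Your proposal is correct and follows essentially the same route as the paper's own proof: the output contains the spanning tree $S$ and is therefore spanning and connected, the removed cycles are closed walks whose deletion preserves degree parity (Observation~\ref{obs:eulerian-cycle}), and the running time is dominated by Kruskal-style sorting and connectivity checks. You are in fact slightly more explicit than the paper on a few points the paper glosses over (edge-disjointness of the packed cycles, why Rule~1 preserves connectivity), but the decomposition and key steps are identical.
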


\begin{proof}
  We need to show that \greedymerge outputs a spanning Eulerian subgraph~$G'$
  (in this context, a \emph{subgraph} is also a multigraph) of $G$.
  From step~2, we know that $S$ is a spanning subgraph of $G$.
  From step~3, we have that $E(C) \subseteq E(G-S)$ for every $C \in \mathcal{C}$.
  From step~4, $G'$ is clearly a subgraph of $G$,
  and from $\bigcup_{C \in \mathcal{C}}E(C) \subseteq E(G-S)$,
  we have $E(G') \supseteq E(S)$, and hence $G'$ is spanning.
  Knowing that $G'$ is connected and from Observation \ref{obs:eulerian-cycle},
  a multigraph constructed by removing any closed walk remains Eluerian.
  Hence, $G'$ is a spanning Eluerian subgraph of $G$.

  We next analyze the running time.
  Rule 1 can be executed exhaustively in~$\Oh(m)$~time by iterating over all edges.
  At this point, $|E(G)| \leq 3 \cdot \binom{n}{2} \leq 2 n^2$.
  Now let us analyze the rest of the steps in \greedymerge.
  The running time of step $2$ is the same as that of Kruskal's algorithm, which is $\Oh(m \log m) \subseteq \Oh(m \log (2n^2)) = \Oh(m \log n)$.
  Similarly, step $3$ has the same running time as we iteratively examine edges and check for connectivity.
  Hence, the overall running time is in~$\Oh(m \log n)$.
  \qed
\end{proof}

\vfill
\pagebreak
\section{Experiment Details and Supplemental Results}\label{appendix:experiments}

The following table summarizes the test instances we used for our experiment.

\br
\noindent\begin{minipage}{\linewidth}
  \centering
  \makebox[\linewidth]{
      \centering

      \setlength{\tabcolsep}{0.5em} 
      {\renewcommand{\arraystretch}{1.3}

    \small
    \begin{tabular}{|c|c|r|r|r|r|}
        \hline
        \multicolumn{2}{|c|}{dataset} & \multicolumn{2}{c|}{\CRISP} & \multicolumn{2}{c|}{\DRONE} \\
        \hline
        \multicolumn{2}{|c|}{$n_\text{build}$} & 1,000 & 2,000 & 1,000 & 2,000 \\
        \hline
        \multicolumn{2}{|c|}{name} & \CRISPsmall & \CRISPbig & \DRONEsmall & \DRONEbig \\
        \hline
        \hline
        \multicolumn{2}{|c|}{number of vertices ($n$)} & 1,006 & 2,005 & 1,002 & 2,001 \\
        \multicolumn{2}{|c|}{number of edges ($m$)} & 18,695 & 41,506 & 19,832 & 44,089 \\
        \multicolumn{2}{|c|}{number of colors} & 4,200 & 4,200 & 3,204 & 3,254 \\
        \hline
        \multicolumn{2}{|c|}{\makecell{number of colors\\ at the starting vertex}} & 535 & 540 & 10 & 10 \\
        \hline
        \multirow{4}{*}{\makecell{number of colors\\ at a vertex}}
        & min & 0\hspace*{3ex} & 0\hspace*{3ex} & 0\hspace*{3ex} & 0\hspace*{3ex} \\
        & mean & 183.39 & 175.71 & 22.67 & 19.16 \\
        & max & 855\hspace*{3ex} & 876\hspace*{3ex} & 129\hspace*{3ex} & 129\hspace*{3ex} \\
        & stdev & 179.02 & 170.75 & 24.61 & 22.41 \\
        \hline
        \multirow{4}{*}{edge weight}
        & min & 0.000002 & 0.000000 & 0.51 & 0.43 \\
        & mean & 0.006971 & 0.005275 & 4.61 & 3.91 \\
        & max & 0.060926 & 0.060926 & 18.51 & 18.51 \\
        & stdev & 0.005354 & 0.004271 & 1.86 & 1.58 \\
        \hline
        \multicolumn{2}{|c|}{minimum spanning tree weight} & 1.109606 & 1.637212 & 1875.53 & 3118.65 \\
        \hline
        \multirow{2}{*}{diameter}
        & unweighted & 7 & 8 & 6 & 7 \\
        & weighted & 0.136846 & 0.138467 & 48.24 & 49.73 \\
        \hline
    \end{tabular}
      }
      }

  \captionsetup{hypcap=false}
  \captionof{table}{Corpus of test instances for \gi{}.}
  \label{tab:instances}
\end{minipage}

\subsubsection*{Experiment Environment.}
We implemented our code with C++ (using C++17 standard).
We ran all experiments on identical hardware,
equipped with 80 CPUs (Intel(R) Xeon(R) Gold 6230 CPU @ 2.10GHz)
and 191000 MB of memory, and running Rocky Linux release 8.8.
We used Gurobi Optimizer 9.0.3 as the ILP solver, parallelized over CPUs.

\begin{figure}[p]
  \begin{subfigure}[t]{\linewidth}
    \includegraphics[width=0.9\linewidth]{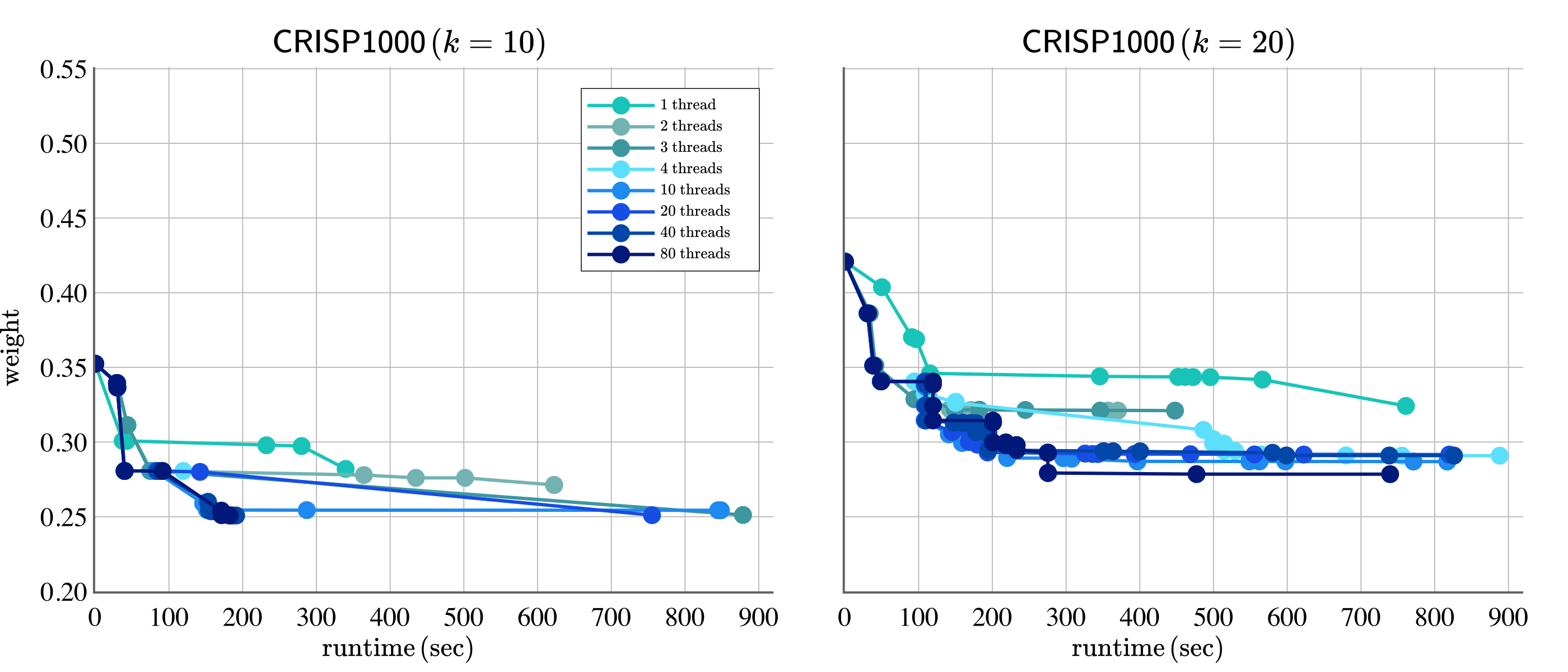}
  \end{subfigure}
  %
  %
  \begin{subfigure}[t]{\linewidth}
    \includegraphics[width=0.9\linewidth]{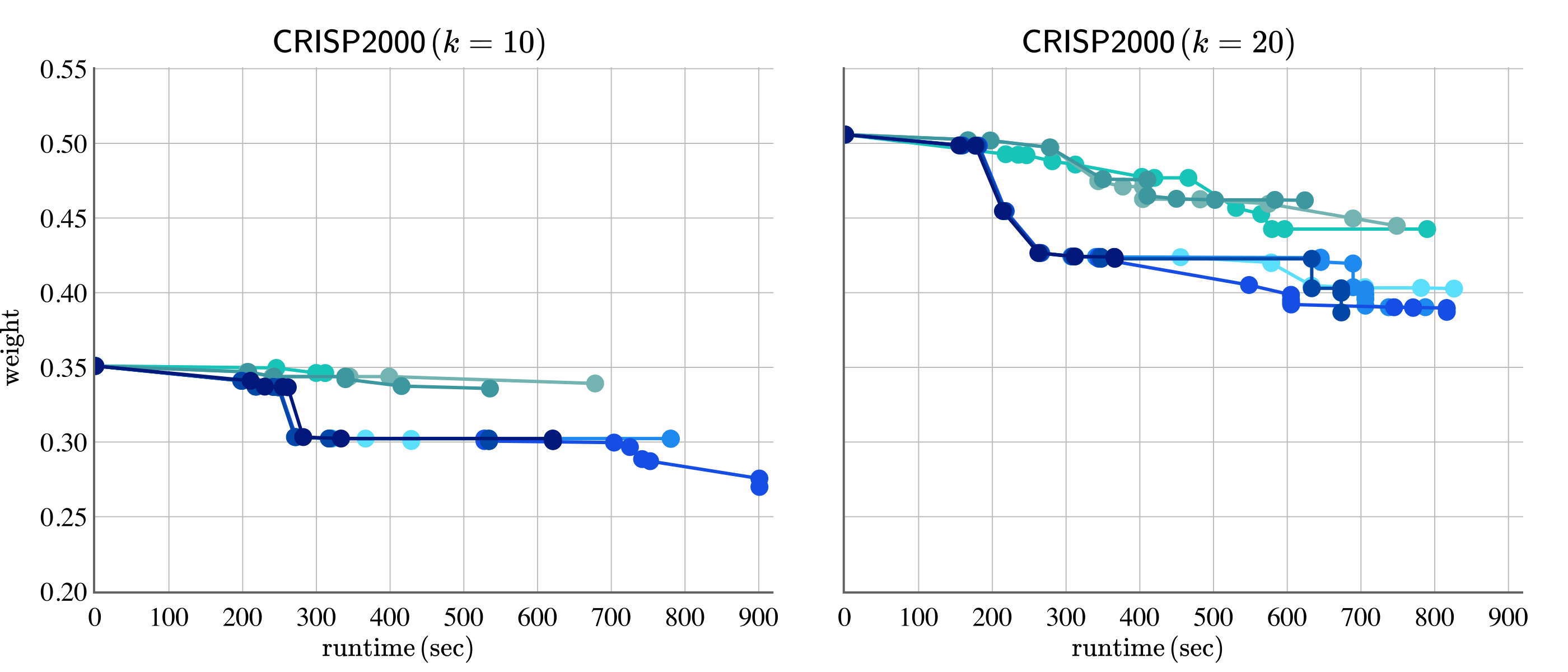}
  \end{subfigure}
  %
  %
  \begin{subfigure}[t]{\linewidth}
    \includegraphics[width=0.9\linewidth]{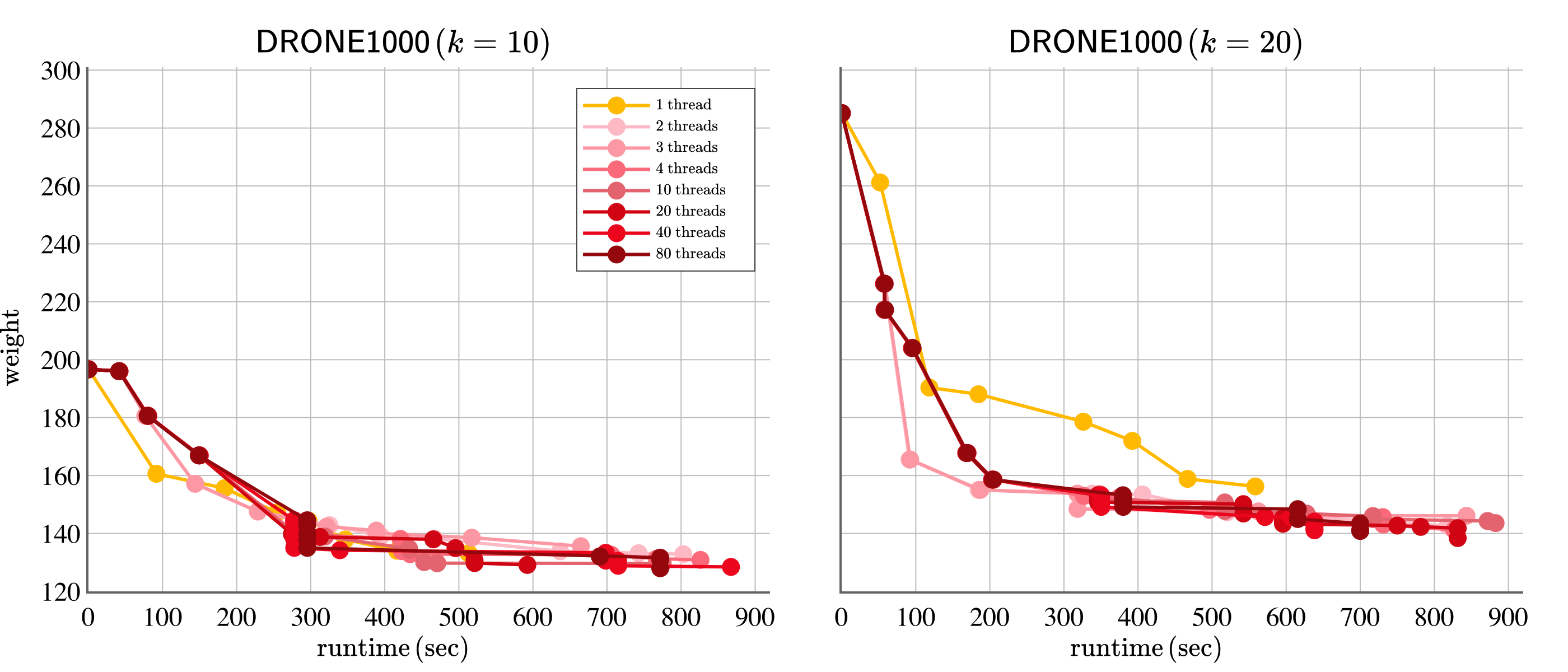}
  \end{subfigure}
  %
  %
  \begin{subfigure}[t]{\linewidth}
    \includegraphics[width=0.9\linewidth]{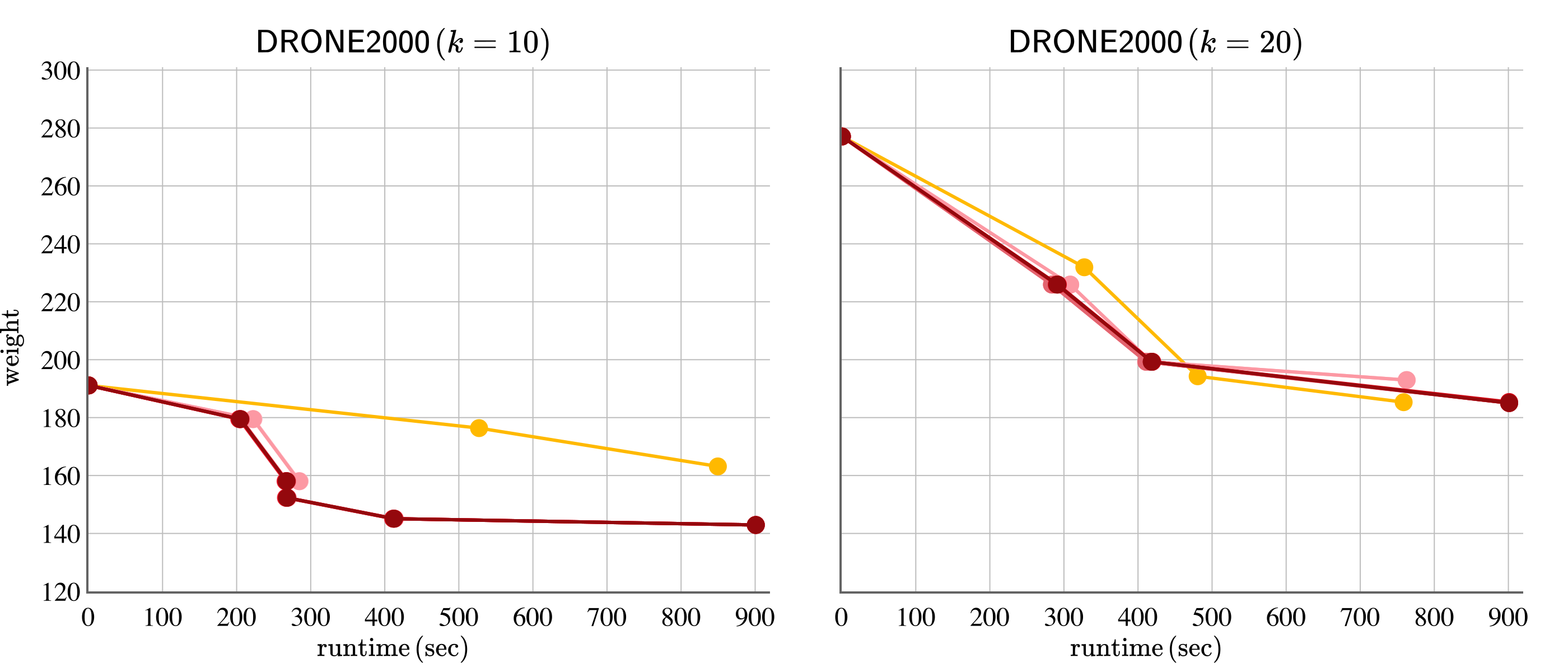}
  \end{subfigure}
  \caption{\label{fig:ilp-multithreading} Results of multithreading experiments
  for \ilpipa{}, executed on datasets \CRISP{} (top two rows) and \DRONE{}
  (bottom two rows), with $k = 10$ (left column) or $k = 20$ (right column).}
\end{figure}

\subsection{Multithreading Analysis}\label{appendix:multithreading}

Here, we present results of our multithreading experiments for \ilpipa{} (visualized in ~\Cref{fig:ilp-multithreading}). We configured the Gurobi ILP solver to output each feasible solution as soon as it is found, so
to understand the impact of multithreading we are interested in determining the lowest-weight
feasible solution identified in a given time limit, for various thread counts. Predictably, the
general trend is clear: multi-threaded implementations provide lower-weight solutions faster than
single threaded solutions.

\vfill
\pagebreak 

\subsection{Additional Empirical Results \& Figures}\label{appendix:additional}

Based on our findings in \Cref{appendix:color-reduction}, we also compared all three inspection planning algorithms when \dpipa{} is paired with \metricmd{} color reduction (instead of \greedymd{}, as shown in \Cref{fig:headline}). The results are shown in \Cref{fig:headline2}. 

\begin{figure}[h!]
  \vspace{-1em}
  \begin{subfigure}{\textwidth}
    \includegraphics[width=\linewidth]{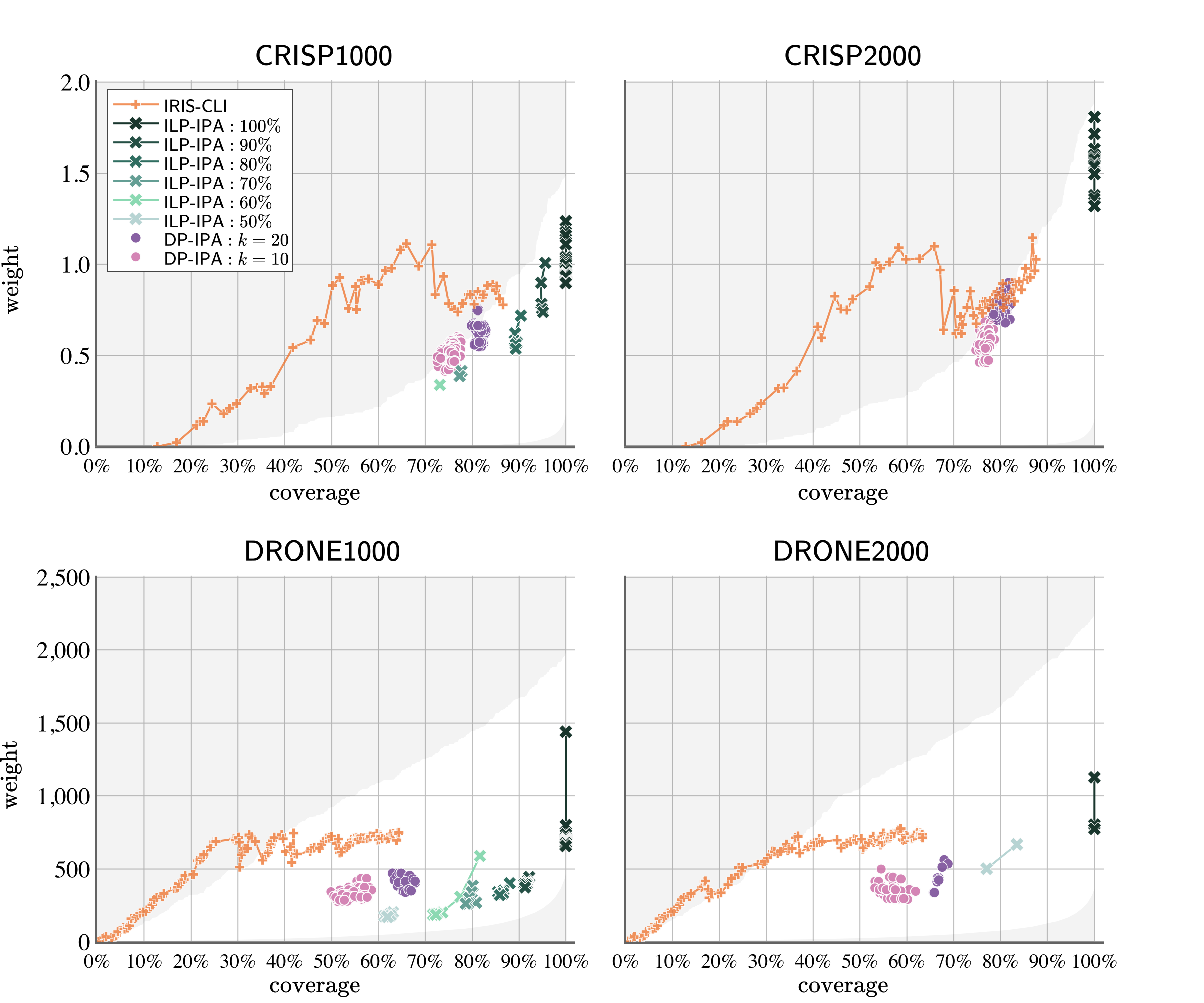}
  \end{subfigure}
  \caption{\label{fig:headline2}  Performance of \iriscli{}, \ilpipa{} and
  \dpipa{} (with \metricmd) on \DRONE~and \CRISP~benchmarks. Each data point represents a computed inspection plan; coverage is shown as a percentage of all POIs in the input graph. The area shaded in gray is outside the upper/lower bounds given in Section~\ref{sec:theory-bounds}.}
\vspace{-2em}
\end{figure}

They are qualitatively quite similar, but we observe that this approach has the disadvantage of requiring the POI similarity function to be a metric. 
\vfill
\pagebreak
We showed the results of applying our color partitioning methods in combination with \greedymd{} and \metricmd{} for the 1000-node graphs in \Cref{fig:color-reduction-data}; the analogous results for the 2000-node graphs are included below in \Cref{fig:color-part-appendix}.
\begin{figure}[h!]
  \begin{subfigure}{\textwidth}
    \includegraphics[width=\linewidth]{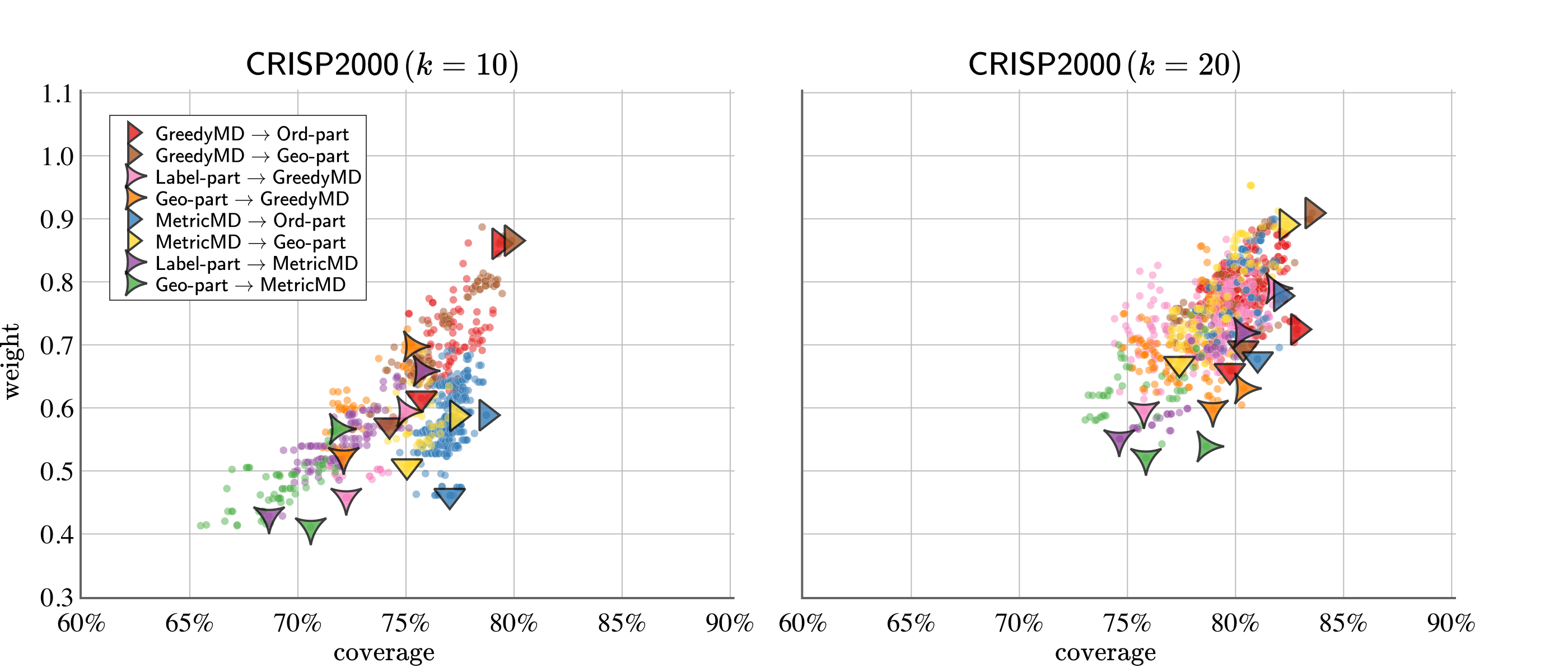}
  \end{subfigure}
  \begin{subfigure}{\textwidth}
    \includegraphics[width=\linewidth]{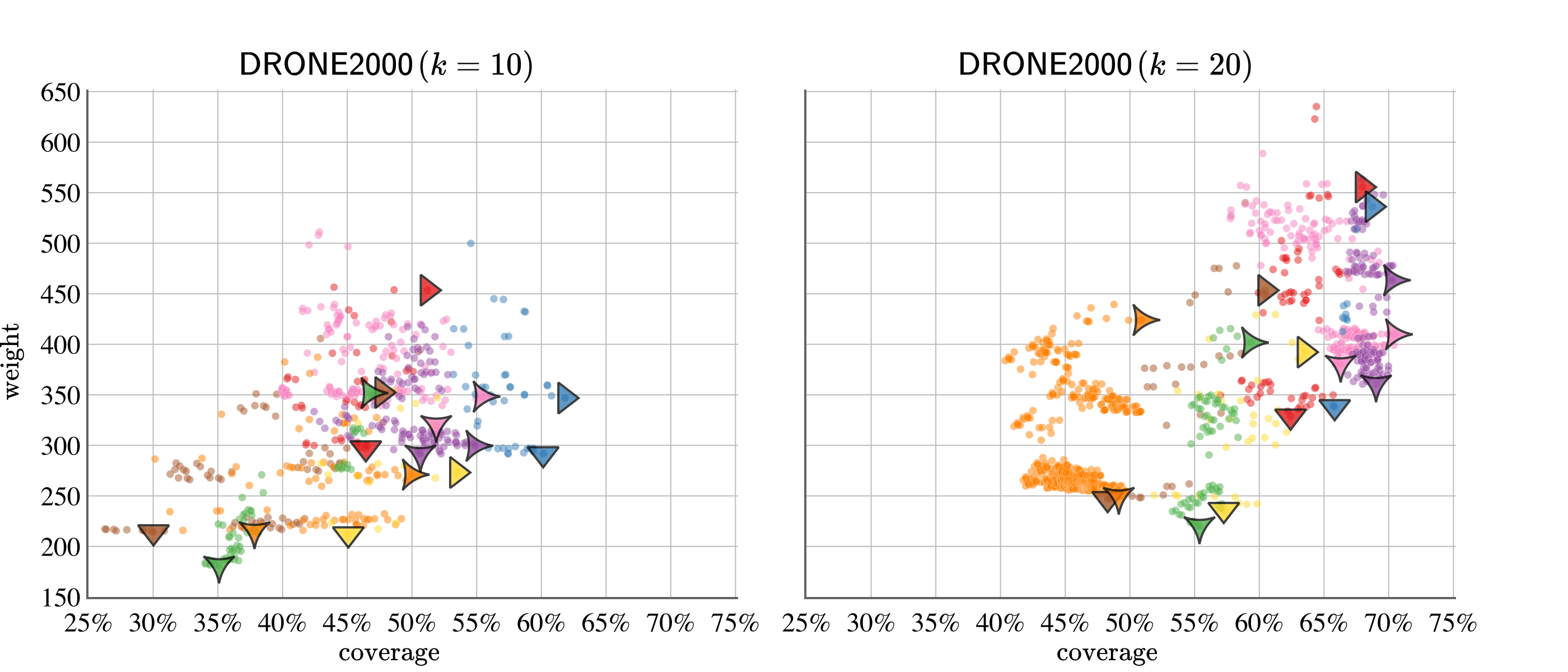}
  \end{subfigure}
  \caption{\label{fig:color-part-appendix}
  Results of our color partitioning experiments for datasets \CRISPbig{} and \DRONEbig{} with $k \in \{10, 20\}$. Every data point represents a solution computed using \dpipa{} and \exactmerge{}. For each combination of color reduction and color partitioning strategies, the solution with maximum coverage is indicated with a rightward-pointing arrow, and the solution with minimum weight is indicated with a downward-pointing arrow.}
\end{figure}

\vfill
\pagebreak
After applying color reduction and partitioning, we have a set of walks that need merging, as discussed in \Cref{sec:walk-merging} with additional details in \Cref{sec:merge-proof}. The empirical results of comparing these approaches are shown in \Cref{fig:merge-experiments}.

\begin{figure}[h!]
  \begin{subfigure}{\textwidth}
    \includegraphics[width=\linewidth]{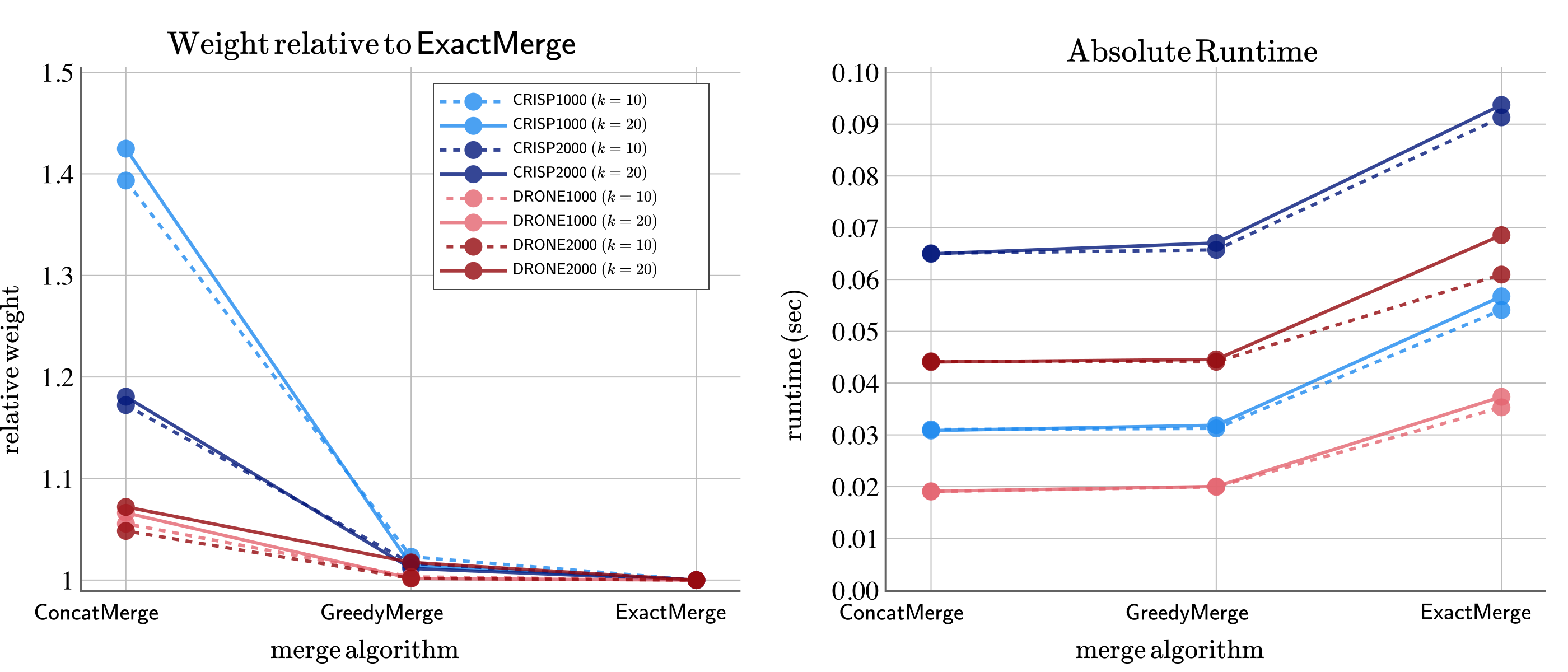}
  \end{subfigure}
  \caption{\label{fig:merge-experiments} Experimental results comparing \concatmerge, \greedymerge, and \exactmerge.
  On the left, the weight of the combined walk is compared (relative to \exactmerge). On the right, runtimes are compared.
  In all experiments, three walks were combined.}
\end{figure}

\Cref{tab:ilp-vs-dp} compares the solution weights generated by \dpipa{} and \ilpipa{}.
In this table, both \dpipa{} and \ilpipa{} are used to produce solutions on color-reduced graphs.
The table shows solution weights by \ilpipa{} relative to the optimal weights
that \dpipa{} produces.
The trend is clear: \ilpipa{} becomes less competitive with \dpipa{} as $n$ ($n_\text{build}$) and $k$ grow.

\begin{table}[h]
  \setlength{\tabcolsep}{8pt}
    \centering
    \small
    \begin{tabular}{l l l l l l l l}
      \multicolumn{2}{c}{} & \multicolumn{3}{c}{\CRISP{}} & \multicolumn{3}{c}{\DRONE{}} \\
      \cmidrule(lr){3-5} \cmidrule(lr){6-8}
      $n_\text{build}$ &$k$ &mean &min &max &mean &min &max \\
      \hline
      \multirow{2}*{$1,000$} &$10$ &1.03 &1.00 &1.16   &1.04 &1.00 &1.13\\
                            &$20$ &1.04 &1.00 &1.31   &1.05 &1.00 &1.18\\[3pt]
      \multirow{2}*{$2,000$} &$10$ &1.12 &1.00 &1.38   &1.17 &1.03 &1.38\\
                            &$20$ &1.16 &1.01 &1.34   &1.19 &1.02 &1.51\\
    \end{tabular}
    \vspace{1pt}
    \caption{\label{tab:ilp-vs-dp} Comparison of solution weights generated by \ilpipa{}, relative to the lowest weight produced by \dpipa{}; both solvers had a 15-minute timeout.}
    \vspace{-2.5em}
  \end{table}

\end{document}